%% file: main.tex
\documentclass[10pt]{article}
\usepackage[preprint]{tmlr}  %
\input{math_commands.tex}

\usepackage{hyperref}
\usepackage{url}
\usepackage{amsmath,amssymb,amsthm}
\usepackage{graphicx}
\usepackage{booktabs}
\usepackage{multirow}
\usepackage{caption}
\usepackage{subcaption}
\usepackage{pifont}
\usepackage{wrapfig}
\usepackage{xspace}

\newcommand{\cmark}{{\normalfont\ding{51}}}
\newcommand{\xmark}{{\normalfont\ding{55}}}
\newtheorem{theorem}{Theorem}
\newtheorem{lemma}{Lemma}
\newtheorem{proposition}{Proposition}

\title{Revisiting Euler-Angle Regression with Kolmogorov-Arnold Networks}

\author{\name Yangting Sun \email sunyt1230@gmail.com \\
      \addr Independent Researcher
      \AND
      \name Zijun Cui \email cuizijun@msu.edu \\
      \addr Michigan State University
      \AND
      \name Yufei Zhang \email yufeizhang96@outlook.com \\
      \addr Independent Researcher}

\begin{document}
\maketitle

\begin{abstract}
In many real-world systems, including articulated robots and biomechanical models, rotations are defined in joint space and naturally parameterized by Euler angles with bounded ranges. Yet regressing Euler angles remains challenging, as their discontinuities and singularities often destabilize training. In this work, we revisit Euler-angle regression and show that its effectiveness depends critically on the interaction between rotation representation, regression architecture, and domain constraints. We introduce a new framework that combines range-aware Euler modeling with Kolmogorov-Arnold Networks (KAN), which replace fixed node-wise activations with learnable univariate functions on edges. We further provide theoretical analysis indicating that bounded Euler ranges motivate a near-additive structure in the regression function, which favors the additive functional form of KAN, and we confirm this trend empirically. Extensive experiments on controlled rotation regression, object pose estimation, and robotic and human inverse kinematics demonstrate consistent improvements in accuracy, convergence, and efficiency. The code will be publicly available.
\end{abstract}

\section{Introduction}
\label{sec:intro}

Rotation describes the orientation of rigid bodies, cameras, and articulated parts. Estimating rotations from sensor data is fundamental across computer vision, graphics, robotics, and biomechanics, with applications in camera localization \cite{zhang2024cameras}, human kinematics and dynamics estimation \cite{le2026quamo,xia2025reconstructing,zhang2025diffusion,Zhang_2024_CVPR,ismayilzada2026pad,delp2007opensim}, and robotic manipulation \cite{spong2008robot}. With the rise of end-to-end deep learning, rotation estimation is now commonly approached by training neural networks to directly regress rotations from sensor measurements~\cite{levinson2020analysis,peretroukhin2020smooth,gu2023learning,okorn2023deep}.

3D rotation regression is fundamentally challenging because rotations lie on $\mathrm{SO}(3)$, a curved, non-Euclidean manifold that aligns poorly with conventional deep learning frameworks~\cite{bronstein2017geometric}. Specifically, classical rotation representations use minimal parameterizations that map $\mathbb{R}^3$ to $\mathrm{SO}(3)$. While these classical parameterizations offer clear interpretability in task-space~\cite{schuck2025primer,sfikas2025euler} or respect the Lie group structure of $\mathrm{SO}(3)$~\cite{grassia1998practical}, they are subject to discontinuities and singularities that challenge the standard learning process. For example, Euler angles capture sequential rotations about three Euclidean axes, but exhibit discontinuities from angular periodicity and singular configurations (commonly known as gimbal lock) where the mapping loses local invertibility.

Rather than adopting minimal parameterizations as in classical approaches, recent works propose overparameterized representations of $\mathrm{SO}(3)$ for rotation regression outputs~\cite{bregier2021deep,peretroukhin2020smooth}. Notably, the 6D representation~\cite{zhou2019continuity} parameterizes a 3D rotation using two unconstrained 3D vectors. While this strategy incurs representational redundancy and an additional post-orthogonalization step, it yields a continuous embedding and mitigates singularities. As a result, the 6D representation has become widely adopted in various pipelines, with empirical results reported in tasks such as monocular 3D human body and hand reconstruction~\cite{xia2025reconstructing,yu2025dyn} and humanoid control~\cite{Luo_2023_ICCV}.

Despite the widespread usage of the 6D representation, two aspects remain insufficiently examined in existing methods. First, prior work has largely focused on representation design while keeping the regression architecture fixed, most commonly using multilayer perceptrons (MLP)~\cite{zhou2019continuity}. The interplay between rotation parameterization and network architecture is underexplored. Second, in real-world systems, rotations are often modeled using Euler angles with application-specific range constraints that can be particularly tight. For example, the human body and hand exhibit anatomically meaningful joint rotations with constrained degrees of freedom and strict range limits~\cite{zhang2023body,zhang2024weakly,lin2026monocular}. Similarly, robotic systems such as the Franka arm~\cite{bensadoun2022neural} and Allegro hand~\cite{chen2024object} are controlled directly in joint space, where actuation, torque, and safety constraints are defined over bounded rotational coordinates. How to faithfully integrate such range-constrained Euler angles in a learning-based framework has received limited study.

In this work, we revisit Euler-angle regression by addressing these two gaps. We target the bounded-range, application-native regime, rather than generic $\mathrm{SO}(3)$ prediction. Instead of relying on redundant parameterizations and post-orthogonalization steps to enforce $\mathrm{SO}(3)$ constraints, we operate directly in Euler-angle space. Recognizing the intrinsic angular structure of Euler angles, we introduce Kolmogorov-Arnold Networks (KAN)~\cite{liu2024kan} as a functionally adaptive regression backbone. Unlike conventional MLP, KAN replaces fixed node-wise activations with learnable univariate functions defined on edges, letting each channel adapt its nonlinearity to the structure of bounded Euler-angle targets. Furthermore, we exploit the practical range constraints of Euler angles to mitigate discontinuities and singular behavior. We present theoretical analysis showing how these design choices collectively establish an improved regression framework. Through extensive evaluation in controlled settings and real-world applications, including object pose estimation, a robotic manipulator, and biomechanical hand and full-body models, we validate our proposed framework.

In summary, our primary contributions are as follows:
\begin{itemize}
\item To the best of our knowledge, this is the first work that introduces KAN for 3D rotation regression, while leveraging the intrinsic bounded and angular structure of the Euler-angle coordinates.
\item We incorporate practical Euler-angle range limits and a constraint-aware axis ordering to mitigate discontinuities and singularities, accompanied by a theoretical analysis of their coupling with KAN.
\item Through extensive experiments spanning controlled rotation regression, object pose estimation, and robotic and human inverse kinematics, we show that the proposed framework improves accuracy, efficiency, and optimization stability over baseline methods.
\end{itemize}

\section{Related Work}
\label{sec:relatedwork}

\noindent\textbf{Learning-Based Rotation Regression.} When learning over the non-Euclidean manifold $\mathrm{SO}(3)$, a dominant line of work attributes optimization instability to the topological limitations of minimal parameterizations. \citet{zhou2019continuity} discussed the discontinuity and singularity issues in minimal representations and proposed the 6D rotation representation constructed from two rotation-matrix columns followed by orthonormalization. Subsequent studies further explored how to project overparameterized network outputs onto valid rotations, including SVD-based orthogonalization~\cite{bregier2021deep,levinson2020analysis}, and approaches that regress full rotation matrices without explicit orthogonalization to improve training efficiency~\cite{gu2023learning}. More recent analyses also examine how representation choice affects optimization and learning behavior~\cite{geist2024learning,schuck2025primer}. Current work on rotation regression primarily focuses on formulating various output spaces, while ignoring the valuable joint-angle constraints inherent in real-world articulated systems, as discussed below.

\noindent\textbf{Euler Angles in Practical Articulated Systems.} Euler angles explicitly parameterize degrees of freedom and encode joint range constraints in real-world articulated systems. Early learning-based methods incorporate this structure by directly regressing Euler angles in tasks such as video-based head pose and body pose estimation~\cite{borghi2018face,habermann2020deepcap}. The model performance of this direct formulation is suboptimal due to discontinuities and singularities. Other methods instead introduce structured rotation decompositions derived from kinematic priors, such as swing-twist formulations for human body and hand modeling~\cite{villegas2018neural,li2021hybrik}. Recent works predict 6D rotations and subsequently convert them to Euler angles, either during training~\cite{zhang2024weakly,zhang2023body} or via offline labeling~\cite{xia2025reconstructing}, to enforce joint-angle constraints. These practices underscore the reliance on Euler angles. However, they do not leverage the structural constraints of Euler angles as a valuable inductive bias for rotation regression directly during learning.

\noindent\textbf{Neural Networks with Inductive Bias.}
Incorporating structural inductive biases into neural networks has been shown to improve model performance and optimization stability~\cite{cui2023knowledge}. For example, geometric equivariant networks encode symmetries directly into the architecture~\cite{deng2021vector,melnyk2026role}, leading to improved sample efficiency, generalization, and robustness to geometric variations. Similarly, physical principles have inspired architectures such as Hamiltonian neural networks~\cite{greydanus2019hamiltonian} and physics-guided transformer world models~\cite{liu2026kepler}. Such structural priors benefit a range of tasks, including human motion prediction~\cite{zhang2024incorporating}. Particularly, Kolmogorov-Arnold Networks (KAN)~\cite{liu2024kan} introduce a distinct architectural paradigm inspired by the Kolmogorov-Arnold representation theorem. KAN has shown promise in function approximation and symbolic regression, yet its advantages for 3D rotation regression have not been established. To our knowledge, we are the first to demonstrate its benefits for this setting. Moreover, our approach differs from prior work by explicitly leveraging the highly restricted joint-angle ranges in practical systems to mitigate discontinuities and singularities during the prediction of Euler angles.

\section{Methodology}
\label{sec:method}

We focus on Euler-angle regression under practical range constraints. We introduce a new framework that effectively leverages KAN while incorporating the constrained, angular structure. We first review rotation representations and Euler-angle parameterizations in Section~\ref{subsec:preliminaries}. We then formalize the 3D rotation regression problem in Section~\ref{subsec:problem_formulation}. Next, we introduce KAN and its adaptation to Euler-based rotation regression in Section~\ref{subsec:kan_euler}. Finally, we present practical strategies to mitigate discontinuities and singularities in Euler-angle regression in Section~\ref{subsec:practical_guidelines}, and provide theoretical analysis in Section~\ref{subsec:theoretical_analysis} to explain why the proposed design improves regression behavior.
\subsection{Preliminaries}
\label{subsec:preliminaries}

A 3D rotation is an element of the special orthogonal group
\begin{equation}
\mathrm{SO}(3)
=
\{ R \in \mathbb{R}^{3 \times 3} \mid R^\top R = I,\ \det(R)=1 \}.
\end{equation}
Different parameterizations of $R$ provide different coordinate charts of $\mathrm{SO}(3)$. We review the studied Euler-angle parameterization below.

\noindent\textbf{Euler Angles.}
Euler angles describe a rotation as a sequence of three elemental rotations about coordinate axes, and the resulting parameterization is order dependent. For example, under a $ZXY$ convention with Euler angles
$\boldsymbol{\theta} = [\alpha,\beta,\gamma]^\top$,
the rotation matrix is
\begin{equation}
R(\alpha,\beta,\gamma)
=
R_z(\alpha)\,R_x(\beta)\,R_y(\gamma),
\end{equation}
where we denote $c_\alpha=\cos\alpha$, $c_\beta=\cos\beta$, $c_\gamma=\cos\gamma$,
and $s_\alpha=\sin\alpha$, $s_\beta=\sin\beta$, $s_\gamma=\sin\gamma$. The elemental rotations are
\begin{equation}
R_z(\alpha)=
\begin{bmatrix}
c_\alpha&-s_\alpha&0\\
s_\alpha&c_\alpha&0\\
0&0&1
\end{bmatrix},\quad
R_x(\beta)=
\begin{bmatrix}
1&0&0\\
0&c_\beta&-s_\beta\\
0&s_\beta&c_\beta
\end{bmatrix},\quad
R_y(\gamma)=
\begin{bmatrix}
c_\gamma&0&s_\gamma\\
0&1&0\\
-s_\gamma&0&c_\gamma
\end{bmatrix}.
\end{equation}

Euler angles are highly interpretable, as each parameter represents a rotation about a fixed axis and the rotation matrix has an explicit trigonometric structure in $(\alpha,\beta,\gamma)$. 
However, singularities occur at $\beta=\pm\frac{\pi}{2}$. 
For instance, when $\beta=\frac{\pi}{2}$, the rotation matrix depends only on $\alpha+\gamma$, so $(\alpha,\tfrac{\pi}{2},\gamma)$ and $(\alpha+\delta,\tfrac{\pi}{2},\gamma-\delta)$ yield the same rotation. 
Near this configuration, small perturbations in $R$ can induce large variations in $\alpha$ and $\gamma$.

\subsection{Problem Formulation}
\label{subsec:problem_formulation}
Different rotation representations have been studied across various downstream tasks. Here we consider real-world systems that often operate directly in Euler angles. Let $\mathbf{x} \in \mathcal{X}$ denote the observational input associated with a rotation, such as kinematic descriptors. Our objective is therefore to effectively learn a regression function $f_W$, parameterized by network weights $W$, that maps observational inputs $\mathbf{x}$ to estimated Euler angles
\begin{equation}
\label{eq:problem}
\hat{\boldsymbol{\theta}} = f_W(\mathbf{x}).
\end{equation}
Moreover, each joint $i$ operates within a finite interval
\begin{equation}
\label{eq:range}
    \theta_i \in I_i := [\theta_i^{\min},\,\theta_i^{\max}].
\end{equation}
Reflecting inherent anatomical or mechanical constraints, these range limits lie within a $2\pi$ period and are often substantially restricted.

To solve Equation~\ref{eq:problem}, existing approaches most commonly adopt standard MLP with the 6D representation~\cite{zhou2019continuity}. The predicted rotations are subsequently converted to Euler angles under a fixed convention, either to enforce angular range constraints during training~\cite{zhang2023body,zhang2024weakly} or for direct use in downstream tasks~\cite{xia2025reconstructing}. Such pipelines treat Euler angles primarily as post-hoc coordinates, disregarding their intrinsic angular structure and practical range constraints during learning. We instead argue for architectures with stronger functional adaptability to these coordinate properties, a capability fundamentally supported by the Kolmogorov-Arnold Network (KAN), as introduced below.

\subsection{Kolmogorov-Arnold Networks for Euler-Angle Regression}
\label{subsec:kan_euler}

We illustrate the architecture of KAN and highlight its key structural difference from standard MLP in Figure~\ref{fig:kan_euler}.

\begin{figure}[t]
\centering
\includegraphics[width=\columnwidth]{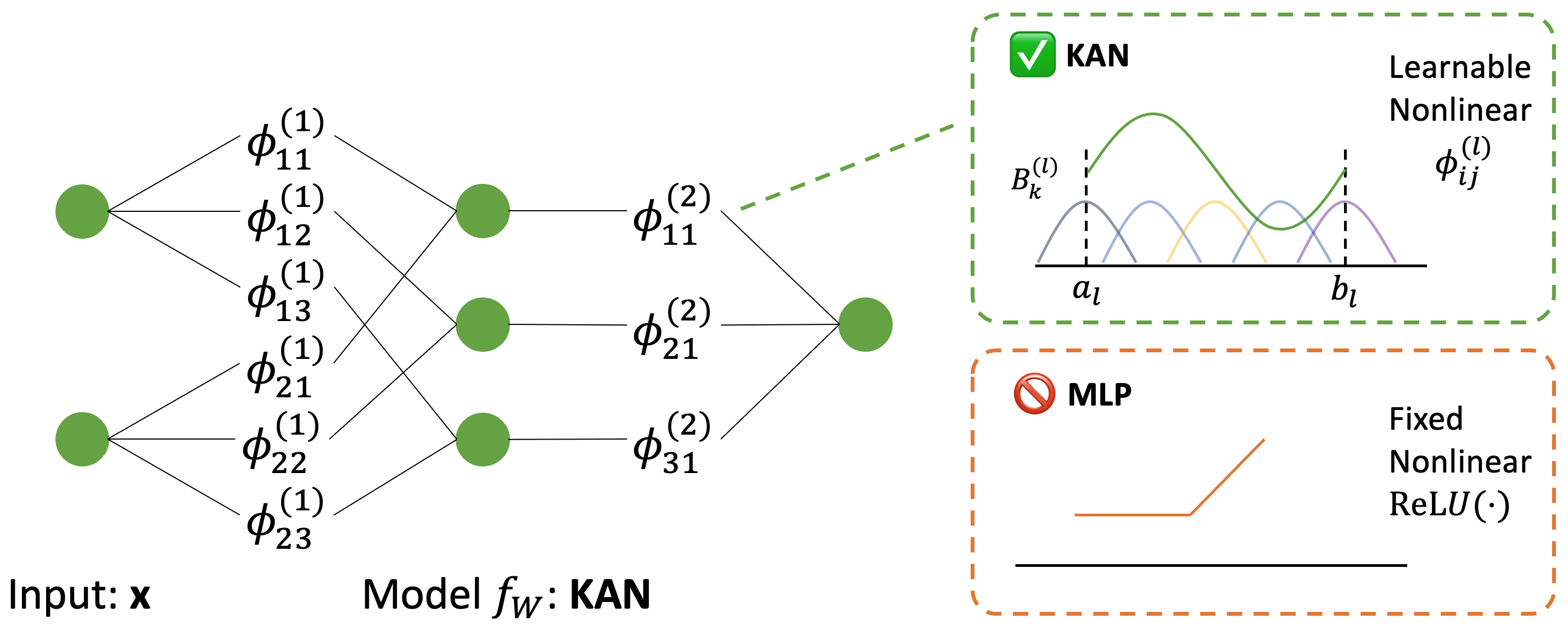}
\caption{
\textbf{Illustration of Kolmogorov-Arnold Networks (KAN) and their advantages over MLPs.} We show a two-input, one-hidden-layer example (green nodes), although practical models are typically deeper and wider. %
}
\label{fig:kan_euler}
\end{figure}

\noindent\textbf{Model Architecture.}
KAN retains a layered feedforward topology but moves the nonlinearity from nodes to edges by assigning a learnable univariate function to each directed connection.
For each layer and output channel, KAN computes
\begin{equation}
h^{(\ell+1)}_j
=
\sum_{i=1}^{d_\ell}
\phi^{(\ell)}_{ij}\!\left(h^{(\ell)}_i\right),
\label{eq:kan}
\end{equation}
where $\phi^{(\ell)}_{ij}:\mathbb{R}\rightarrow\mathbb{R}$ is a learnable scalar function associated with the edge from input channel $i$ in layer $\ell$ to output channel $j$ in layer $\ell+1$.
The final output is again $\hat{\boldsymbol{\theta}}=\mathbf{h}^{(L)}\in\mathbb{R}^{3}$.

In practice, each edge function $\phi^{(\ell)}_{ij}$ is parameterized on a bounded interval $[a_\ell,b_\ell]$ using a spline basis.
Concretely, we adopt a degree-$p$ B-spline expansion
\begin{equation}
\phi^{(\ell)}_{ij}(t)
=
\sum_{k=1}^{K_\ell}
c^{(\ell)}_{ij,k}\, B^{(\ell)}_{k}(t),
\quad t\in[a_\ell,b_\ell],
\label{eq:spline}
\end{equation}
where $\{B^{(\ell)}_{k}\}_{k=1}^{K_\ell}$ are fixed B-spline basis functions defined by a knot sequence on $[a_\ell,b_\ell]$, and $\{c^{(\ell)}_{ij,k}\}$ are learnable coefficients.
The integer $K_\ell$ determines the number of basis functions and thus controls the expressive capacity of edge nonlinearities in layer $\ell$.
This parameterization makes the nonlinearity explicitly learnable and channel-specific, in contrast to MLP where a fixed activation function, such as ReLU, is shared across all units.

\noindent\textbf{Alignment with the Trigonometric Structure of Euler Angles.}
The entries of a rotation matrix are polynomial combinations of $\sin(\cdot)$ and $\cos(\cdot)$ applied to individual Euler angles. For example, under a fixed $ZXY$ convention,
\begin{equation}
R_{11}= \cos\alpha\cos\gamma - \sin\alpha\sin\beta\sin\gamma, \quad
R_{12}= -\sin\alpha\cos\beta,
\end{equation}
and similarly for the remaining components.
Thus, each matrix entry is a product of smooth trigonometric functions of the individual angles.

Standard MLPs employ a shared fixed activation across all units, requiring greater depth and repeated compositions to approximate diverse nonlinear patterns. In contrast, KAN represents nonlinearities as learnable univariate functions defined on edges, allowing each channel to adapt its functional form to the specific angular behaviors. Since these nonlinearities are smooth functions defined on bounded intervals, they admit accurate approximation by spline bases. We provide a formal approximation error bound in Section~\ref{subsec:theoretical_analysis}. %

\noindent\textbf{Training Loss.}
During training, we supervise the predicted Euler angles 
$\hat{\boldsymbol{\theta}}$ with the corresponding ground-truth angles 
$\boldsymbol{\theta}$. Specifically, KAN is optimized using a standard 
regression objective
\begin{equation}
\mathcal{L}_{\text{Euler}}
=
\rho\!\left(\hat{\boldsymbol{\theta}}-\boldsymbol{\theta}\right),
\label{eq:euler_loss}
\end{equation}
where $\rho(\cdot)$ denotes a pointwise penalty function, e.g., the 
mean squared error. The intrinsic discontinuities and singularities of Euler 
angles typically hinder the training loss optimization. In the following, we introduce how explicit range constraints that naturally arise in practical systems mitigate these issues, thereby enabling stable training with the standard regression loss.

\subsection{Mitigating Euler-Angle Discontinuities and Singularities through Range Constraints}
\label{subsec:practical_guidelines}

Euler angles are periodic coordinates satisfying
$\theta_i \equiv \theta_i + 2\pi m$, where $m \in \mathbb{Z}$.
Their mapping to a rotation matrix is also rotation-order dependent,
which jointly leads to representation discontinuities and singular
configurations. However, when each admissible interval satisfies
$|I_i| < 2\pi$, periodic ambiguity is excluded within the
operational domain, eliminating the discontinuities induced by
equivalent $2\pi$ shifts of the angular coordinates.

We further choose the Euler rotation order to place the most constrained axis at the middle position to mitigate the gimbal-lock singularity. The choice is dictated by where the singularity lies: for the intrinsic $ZXY$ convention with angles $\boldsymbol{\theta}=(\theta_1,\theta_2,\theta_3)$ (the same triple denoted $[\alpha,\beta,\gamma]$ above), gimbal lock arises \emph{only} when the middle angle reaches $\theta_2 = \pm\tfrac{\pi}{2}$; the middle slot is therefore the sole singular one. Let $k \in \{1,2,3\}$ index the most constrained axis, whose admissible interval satisfies
\begin{equation}
    I_k \subset \left(-\tfrac{\pi}{2},\,\tfrac{\pi}{2}\right).
\end{equation}
We then adopt the Euler convention that assigns axis $k$ to the middle position. This choice leaves the represented orientation unchanged and only confines the singularity-prone middle angle to a singularity-free interval, satisfying $\theta_2 \in I_k \subset (-\tfrac{\pi}{2},\tfrac{\pi}{2})$. For example, in human hand articulation, finger twist (longitudinal rotation) is limited and often neglected; we assign this axis to the middle angle.

The resulting Euler chart is smooth and
locally well-conditioned over the compact domain
\begin{equation}
\label{eq:bound}
\Theta = I_1 \times I_2 \times I_3 \subset \mathbb{R}^3.    
\end{equation}
Consequently, Euler regression reduces to learning a mapping
\(
f:\mathcal{X} \to \Theta,
\)
where $\Theta$ is compact and free of singularities and periodic ambiguity. 

In practice, joints often exhibit only one or two rotational degrees of freedom~\cite{bensadoun2022neural,zhang2024weakly}. In these cases, the regression problem reduces to learning a low-dimensional mapping over bounded intervals, which is considerably simpler. We illustrate this for the hand and body models by providing the per-joint range specifications and axis assignments in Appendix~\ref{appx:singularity}.

\subsection{Theoretical Analysis of Euler-Angle Regression with KAN}
\label{subsec:theoretical_analysis}

This section explains why Euler-angle regression is well-suited to KAN in the range-constrained regime studied here. On the bounded chart, the target function decomposes into an additive component plus a residual interaction term, and KAN attains an approximation bound comprising a univariate spline term and a bounded interaction residual. Proofs are provided in Appendix~\ref{appx:theoretical_analysis}.

\noindent\textbf{Setup.}
We analyze the regression regime induced by the bounded, well-conditioned Euler chart $\Theta$ of Equation~\ref{eq:bound}, on which the Euler-to-rotation map $\Theta \to \mathrm{SO}(3)$ is smooth and everywhere nonsingular.

\begin{lemma}[Interaction control under bounded Euler variation]
\label{lem:euler_interaction_small}
Let $f:\mathcal{X}\to\Theta$ denote the Euler-angle regression map, where $\Theta$ is the bounded Euler chart in Equation~\ref{eq:bound}. Assume each component $f_i\in C^p(\mathcal{X})$ ($p\ge 3$) admits an ANOVA decomposition
\begin{equation}
f_i(x)
=
f_{i,0}
+
\sum_{j=1}^{d} f_{i,j}(x_j)
+
\sum_{|S|>1} f_{i,S}(x_S).
\end{equation}
For such $C^p$ functions on a bounded domain, the higher-order interaction mass is controlled by mixed partial derivatives:
\begin{equation}
\sum_{|S|>1}\|f_{i,S}\|_\infty
\le
C_{\mathcal X}
\max_{j\neq k}
\|\partial^2_{jk}f_i\|_\infty,
\label{eq:interaction_bound_general}
\end{equation}
where $C_{\mathcal X}$ depends only on the domain $\mathcal X$.

In the bounded Euler regime, the Euler-to-rotation map is a composition of elemental rotations. Locally, angular displacements can be expanded through the Baker--Campbell--Hausdorff formula:
\begin{equation}
\prod_i \exp(u_i G_i)
=
\exp\left(
\sum_i u_iG_i
+
\frac{1}{2}\sum_{j<k}u_j u_k [G_j,G_k]
+
O(\|u\|^3)
\right),
\label{eq:bch_local}
\end{equation}
where $u_i$ denotes the local angular displacement from a reference and $G_i$ are the corresponding rotation generators. The first-order term is additive across Euler coordinates, while the leading cross-coordinate coupling enters through the second-order commutator terms $\tfrac{1}{2}\sum_{j<k}u_j u_k [G_j,G_k]$, which are $O(\delta^2)$ in the chart half-width $\delta := \tfrac{1}{2}\max_k |I_k|$, with $I_k$ the admissible intervals of Equation~\ref{eq:bound}. This structure of the Euler-to-rotation map motivates modeling the regression target as additive-plus-residual; we denote its higher-order interaction residual by
\begin{equation}
\epsilon_{\mathrm{int}}
:=
\max_{i}\sum_{|S|>1}\|f_{i,S}\|_\infty .
\label{eq:epsilon_int}
\end{equation}
As the admissible Euler ranges tighten ($\delta$ small), the second-order coupling diminishes, so we expect $\epsilon_{\mathrm{int}}$ to decrease accordingly---a trend we confirm empirically in the range sweep of Appendix~\ref{sec:experiment_rotmat}.
\end{lemma}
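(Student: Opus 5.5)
The lemma makes two distinct claims, and I would prove them separately. The first is the interaction-mass bound \eqref{eq:interaction_bound_general}. The second is the local BCH structure, which supports the expectation that $\epsilon_{\mathrm{int}}$ shrinks with $\delta$.

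\textbf{Bounding the interaction mass.} I would fix a product domain $\mathcal X=\prod_{j=1}^d[a_j,b_j]$ and use the anchored (cut-HDMR) version of the ANOVA decomposition with anchor $c\in\mathcal X$. Then $f_{i,0}=f_i(c)$ and $f_{i,j}(x_j)=f_i(c_1,\dots,x_j,\dots,c_d)-f_i(c)$. The whole interaction part is therefore
\[
r_i(x):=\sum_{|S|>1}f_{i,S}(x_S)=f_i(x)-f_i(c)-\sum_j\bigl(f_i(c+(x_j-c_j)e_j)-f_i(c)\bigr).
\]

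\begin{enumerate}
\item Write $f_i$ along the segment from $c$ to $x$ with the integral form of Taylor's theorem. Do the same for each single-coordinate perturbation.
\item Subtract. The zeroth- and first-order terms cancel exactly, because the gradient term is additive across coordinates.
\item The pure second-derivative terms $\partial_{jj}f_i$ also cancel. The single-coordinate path from $c$ to $c+(x_j-c_j)e_j$ is not the diagonal path, so this step needs care. I would handle it by expressing $r_i$ as an iterated finite difference. For each pair $j\neq k$, take mixed differences $\Delta_j\Delta_k f_i$, telescoping over coordinates one at a time. This gives $r_i(x)=\sum_{j<k}\int\!\!\int \partial^2_{jk}f_i(\xi)\,ds_j\,ds_k$ over suitable rectangles, plus higher mixed terms that again reduce to second mixed derivatives.
\item Bound the result by $\binom d2\,\mathrm{diam}(\mathcal X)^2\max_{j\ne k}\|\partial^2_{jk}f_i\|_\infty$.
\end{enumerate}

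Step 3 is where I expect the main obstacle. The telescoping identity $f(x)-f(c)=\sum_k\bigl[f(x_{\le k},c_{>k})-f(x_{<k},c_{\ge k})\bigr]$, with each summand compared to its one-coordinate anchored version, yields exactly differences in two coordinates at a time. Each such difference is an integral of a $\partial^2_{jk}f_i$ term. This gives the constant $C_{\mathcal X}=\binom d2\,\mathrm{diam}(\mathcal X)^2$.

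If the statement is meant for the $L^2$-orthogonal ANOVA rather than the anchored one, I would pass between them. Each orthogonal component is an averaged version of anchored differences: apply the same mixed-difference bound pointwise, then average over the anchor. Orthogonal projection onto the interaction subspace does not increase the sup-norm by more than a dimension-dependent factor, and that factor is absorbed into $C_{\mathcal X}$.

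\textbf{BCH structure.} I would center the chart at the midpoint $\theta^0$ of $\Theta$ and write each Euler angle as $\theta^0_i+u_i$ with $|u_i|\le\delta$. Next, I would conjugate so that $R(\theta)=R(\theta^0)\prod_i\exp(u_i\tilde G_i)$, where $\tilde G_i$ are the elemental generators conjugated by the partial products at $\theta^0$. The standard BCH expansion for $\mathfrak{so}(3)$, valid for $\|u\|$ below the convergence radius, then gives \eqref{eq:bch_local}. The first-order term is linear and separable in the $u_i$. The commutator term is bilinear and bounded by $\tfrac12\sum_{j<k}|u_j||u_k|\,\|[G_j,G_k]\|\le C\delta^2$, and the remainder is $O(\delta^3)$.

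Finally, I would connect this to $\epsilon_{\mathrm{int}}$. Smoothness of the chart inverse on $\Theta$, which is nonsingular by the middle-axis choice, makes the non-additive coupling of the target inherit this $O(\delta^2)$ scaling. The claim itself remains a motivated expectation, which the paper confirms empirically.
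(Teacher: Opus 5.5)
\textbf{Comparison with the paper.} Your route differs from the paper's. The paper never derives Equation~\ref{eq:interaction_bound_general}. It asserts the bound for the orthogonal (product-measure) ANOVA by appeal to ``standard ANOVA--Sobolev bounds'' in the literature, restates the BCH expansion, and defines $\epsilon_{\mathrm{int}}$, leaving the $\delta$-trend as an expectation. Your anchored finite-difference argument is self-contained and gives an explicit constant. Your conjugation $R(\theta)=R(\theta^0)\prod_i\exp(u_i\tilde G_i)$ is also correct, and it makes Equation~\ref{eq:bch_local} valid about a non-identity reference, which the paper glosses over. Your telescoping is in fact exact: $r_i(x)=\sum_{j<k}\int_{c_j}^{x_j}\int_{c_k}^{x_k}\partial^2_{jk}f_i\,ds_k\,ds_j$, with the remaining coordinates frozen at entries of $x$ or $c$. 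So there are no ``higher mixed terms,'' and the diagonal Taylor expansion of Steps 1--2 is unnecessary.

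\textbf{The gap: you bound the wrong quantity.} Step 4 controls $\|r_i\|_\infty=\|\sum_{|S|>1}f_{i,S}\|_\infty$. The lemma's left-hand side, like $\epsilon_{\mathrm{int}}$, is $\sum_{|S|>1}\|f_{i,S}\|_\infty$, which can be exponentially larger in $d$, and your constant $\binom{d}{2}\mathrm{diam}(\mathcal X)^2$ does not work for it.

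\emph{Counterexample.} Take $f=x_1x_2\prod_{l=3}^{d}\cos(\pi x_l)$ on $[0,1]^d$, anchored at $c=0$. Then $f_S=x_1x_2\prod_{l\in S\setminus\{1,2\}}(\cos\pi x_l-1)$ for $S\supseteq\{1,2\}$ and $f_S=0$ otherwise, so $\sum_{|S|>1}\|f_S\|_\infty=3^{d-2}$. Meanwhile $\max_{j\neq k}\|\partial^2_{jk}f\|_\infty=\pi^2$ and $\binom{d}{2}\mathrm{diam}^2=\binom{d}{2}d$, so your claimed inequality fails for $d\ge 10$. Your transfer to the orthogonal ANOVA has the same defect, since projecting the aggregate residual again only controls the norm of the sum.

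\emph{Repair.} Apply the mixed-difference bound per component. For $|S|\ge2$ the anchored component is the $|S|$-fold mixed difference $\Delta_S f_i$. Choose $j\neq k$ in $S$ and write $\Delta_S=\Delta_{S\setminus\{j,k\}}\Delta_j\Delta_k$. Pointwise, $|\Delta_j\Delta_k f_i|\le(b_j-a_j)(b_k-a_k)\|\partial^2_{jk}f_i\|_\infty$, and $\Delta_{S\setminus\{j,k\}}$ is a signed sum of $2^{|S|-2}$ such evaluations. Summing over $S$ gives $C_{\mathcal X}=\tfrac{1}{4}(3^d-2d-1)\max_j(b_j-a_j)^2$. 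Each orthogonal component is the anchor average $\int f^{(y)}_{i,S}\,d\mu(y)$ of anchored components (M\"obius inversion of the conditional expectations). Hence the same constant holds for the paper's decomposition.

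\textbf{A smaller point.} Your sentence that the target ``inherits'' the $O(\delta^2)$ coupling is not an implication. The ANOVA is taken in the input coordinates $x_j$, while the BCH additivity is in the Euler increments $u_i$. That step belongs to the heuristic part, as in the paper.
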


Lemma~\ref{lem:euler_interaction_small} shows an additive-plus-residual structure, where the dominant term is additive and the interaction residual $\epsilon_{\mathrm{int}}$ stays small. This near-additivity is a bounded-domain effect, available here because Euler angles admit such bounded, singularity-free charts via the axis ordering of Section~\ref{subsec:practical_guidelines}. We now analyze the approximation behavior of KAN under this regime.

\begin{theorem}[Euler-aware approximation bound of KAN]
\label{thm:euler_kan_bound}
Assume Lemma~\ref{lem:euler_interaction_small}. Suppose further that $\mathcal{X}\subset \prod_{j=1}^{d}[a_j,b_j]$ and the univariate ANOVA components satisfy
\begin{equation}
f_{i,j}\in C^p([a_j,b_j]),
\qquad
\|f_{i,j}^{(p)}\|_\infty \le M,
\end{equation}
for all $i\in\{1,2,3\}$ and $j\in\{1,\dots,d\}$. Let $\mathcal{F}_{\mathrm{KAN}}(K,p)$ be the additive spline subclass of KAN with degree-$p$ splines and $K$ basis functions per coordinate. Then for each $i\in\{1,2,3\}$ there exists $\tilde f_i\in\mathcal{F}_{\mathrm{KAN}}(K,p)$ such that
\begin{equation}
\|f_i-\tilde f_i\|_\infty
\le
d\,C_p\,M\,K^{-p}
+
\epsilon_{\mathrm{int}},
\label{eq:kan_euler_bound}
\end{equation}
where $C_p$ depends only on $p$ and the interval lengths $\{b_j-a_j\}_{j=1}^{d}$.
\end{theorem}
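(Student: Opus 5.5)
The plan is to build $\tilde f_i$ term by term from the ANOVA decomposition in Lemma~\ref{lem:euler_interaction_small}, and then split the error with the triangle inequality. I will approximate the additive part $f_{i,0}+\sum_{j=1}^d f_{i,j}(x_j)$ by an element of $\mathcal{F}_{\mathrm{KAN}}(K,p)$ and simply discard the interaction part $\sum_{|S|>1} f_{i,S}$. This gives
\begin{equation*}
\|f_i-\tilde f_i\|_\infty
\le
\sum_{j=1}^d \|f_{i,j}-s_{i,j}\|_\infty
+
\sum_{|S|>1}\|f_{i,S}\|_\infty ,
\end{equation*}
where $s_{i,j}$ is the spline assigned to the edge from input $j$ to output $i$. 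The constant $f_{i,0}$ can be absorbed exactly into any one edge function, since B-spline bases on a knot sequence form a partition of unity and so reproduce constants. The second sum is at most $\epsilon_{\mathrm{int}}$ by its definition in Equation~\ref{eq:epsilon_int}, because that quantity is a maximum over $i$. Note that this step uses only the definition, not the mixed-derivative bound~\eqref{eq:interaction_bound_general}.

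For each univariate term I will invoke the classical Jackson-type estimate for spline quasi-interpolants, e.g.\ de Boor's quasi-interpolant or a Schoenberg-type operator $Q$. Take a quasi-uniform (say uniform) knot sequence on $[a_j,b_j]$ with $K$ basis functions of degree $p$, so the mesh size is $h_j \asymp (b_j-a_j)/(K-p)$. The estimate gives
\begin{equation*}
\|g-Qg\|_{\infty,[a_j,b_j]} \le c_p\, h_j^{p}\,\|g^{(p)}\|_\infty
\end{equation*}
for $g\in C^p$. Here the order $h^p$ is attainable because a degree-$p$ spline reproduces polynomials of degree $\le p$, so one may use the $p$-th derivative; with the paper's convention I would state it for order $p$ (or, if one prefers the $h^{p+1}$ rate for degree $p$, the hypothesis $f\in C^p$ is what limits us to $h^p$). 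Applying this with $g=f_{i,j}$ and $\|f_{i,j}^{(p)}\|_\infty\le M$, and bounding $(K-p)^{-p}\le c'_p K^{-p}$ for $K\ge 2p$, yields $\|f_{i,j}-s_{i,j}\|_\infty\le C_p M K^{-p}$. Here $C_p$ absorbs $c_p$, $c'_p$ and $\max_j (b_j-a_j)^p$, so it depends only on $p$ and the interval lengths, as the statement allows. Summing over the $d$ coordinates gives the $d\,C_p\,M\,K^{-p}$ term.

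It remains to check that $\tilde f_i := f_{i,0}+\sum_j s_{i,j}(x_j)$ actually belongs to $\mathcal{F}_{\mathrm{KAN}}(K,p)$. It is exactly a one-layer KAN of the form~\eqref{eq:kan}, with $\phi_{ji}=s_{i,j}$ expanded in the basis of~\eqref{eq:spline} on $[a_j,b_j]$. The input box is therefore the spline domain, and no inputs fall outside the grid because $\mathcal X\subset\prod_j[a_j,b_j]$. Doing this for $i=1,2,3$ simultaneously gives the output $\hat{\boldsymbol\theta}$.

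I expect the main obstacle to be bookkeeping rather than substance: pinning down the spline approximation theorem with the exact exponent $K^{-p}$ under the stated degree and smoothness conventions, and the dependence of its constant. I would first state it via the de Boor quasi-interpolant bound on a uniform grid, and handle small $K$ ($K<2p$) by enlarging $C_p$, using the trivial bound $\|f_{i,j}-s_{i,j}\|_\infty \lesssim \|f_{i,j}\|_\infty$. That trivial bound is only controlled by $M$ if we normalize $f_{i,j}$, for instance by subtracting its Taylor polynomial. Otherwise I would simply assume $K\ge 2p$.
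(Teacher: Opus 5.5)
Your proposal is correct and follows the paper's proof essentially step for step. You split $f_i$ into its additive part and the interaction residual, and bound the residual by $\epsilon_{\mathrm{int}}$ directly from its definition. You then approximate each $f_{i,j}$ by a degree-$p$ spline with error $C_pMK^{-p}$ using classical (de Boor) spline theory, and conclude with the triangle inequality.

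Two refinements relative to the paper. Absorbing the constant $f_{i,0}$ via the B-spline partition of unity fixes a small omission, since the paper's $G_i$ drops it. Your $K\ge 2p$ caveat is unnecessary, because $(K/(K-p))^p\le (p+1)^p$ for every $K\ge p+1$.
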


In Equation~\ref{eq:kan_euler_bound}, the first term is the spline approximation error on the additive univariate components. The second term is the residual interaction error induced by remaining cross-coordinate coupling. Thus, when $\epsilon_{\mathrm{int}}$ is small, which occurs as the Euler ranges tighten, the target is well approximated by the additive spline form of the KAN architecture.

\begin{proposition}[Contextual comparison to generic MLP approximation guarantees]
\label{prop:mlp_comparison}
Let $f_i\in C^{p}(\mathcal{X})$, $p\ge 3$, where $\mathcal{X}\subset\mathbb{R}^d$ is bounded. Consider the class $\mathcal{F}_{\mathrm{MLP}}(W)$ of fully coupled ReLU MLPs with $W$ trainable parameters. A standard worst-case approximation guarantee for $C^p(\mathcal{X})$ functions yields
\begin{equation}
\inf_{g\in \mathcal{F}_{\mathrm{MLP}}(W)}
\|f_i-g\|_\infty
\le
C_{\mathrm{MLP}}W^{-p/d},
\label{eq:mlp_generic_rate}
\end{equation}
where $C_{\mathrm{MLP}}$ depends only on $\mathcal{X}$ and derivative bounds. In the bounded Euler regime of Theorem~\ref{thm:euler_kan_bound}, spline-KAN achieves
\begin{equation}
\|f_i-\tilde f_i\|_\infty
\le
d\,C_p\,M\,K^{-p}
+
\epsilon_{\mathrm{int}}.
\end{equation}
This comparison indicates that generic MLP guarantees apply to fully coupled $d$-variate functions, whereas the KAN bound exploits the additive-plus-residual structure arising in bounded Euler-angle regression. When $\epsilon_{\mathrm{int}}$ is small, KAN can therefore achieve a more parameter-efficient approximation of the dominant univariate components of the target.
\end{proposition}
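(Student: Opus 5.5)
The proposition has two parts: a cited generic MLP rate, and a restatement of Theorem~\ref{thm:euler_kan_bound}. The comparative remark that closes it is interpretive rather than a mathematical claim. I would therefore prove the two inequalities separately and then justify the parameter-efficiency remark by counting parameters.

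\textbf{MLP bound.} For Equation~\ref{eq:mlp_generic_rate} I would invoke a standard deep ReLU approximation theorem for $C^p$ functions on a bounded domain, such as Yarotsky's result. That theorem covers the unit cube, so the first step is to transfer it to $\mathcal{X}$:
\begin{itemize}
\item Enclose $\mathcal{X}$ in a cube.
\item Extend $f_i$ to the cube by a Whitney/Stein extension, which preserves $C^p$ norms up to a constant depending only on $\mathcal{X}$.
\item Rescale the cube affinely to $[0,1]^d$. The affine map can be absorbed into the first layer at no extra parameter cost.
\end{itemize}
The cited theorem then supplies, for each $W$, a ReLU network with $O(W)$ parameters achieving error $C\,\|f_i\|_{C^p}\,W^{-p/d}$. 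The known construction carries a logarithmic factor $\log W$; I would state the rate up to this factor, or fold it into $C_{\mathrm{MLP}}$ for the range of $W$ considered. This gives $C_{\mathrm{MLP}}$ depending only on $\mathcal{X}$ and derivative bounds.

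\textbf{KAN bound.} The second display is exactly Equation~\ref{eq:kan_euler_bound}. I would obtain it by invoking Theorem~\ref{thm:euler_kan_bound} under the same hypotheses. For completeness I would recall its mechanism:
\begin{itemize}
\item Approximate each univariate ANOVA component $f_{i,j}$ by its quasi-interpolant spline, with error $C_p M K^{-p}$ (de Boor's Jackson-type estimate).
\item Absorb the constant $f_{i,0}$ into one edge.
\item Sum over the $j=1,\dots,d$ components, giving $d\,C_p M K^{-p}$ by the triangle inequality.
\item Bound the dropped interaction part by $\epsilon_{\mathrm{int}}$, using its definition in Equation~\ref{eq:epsilon_int}.
\end{itemize}

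\textbf{Comparative remark.} The additive spline KAN uses $W_{\mathrm{KAN}}=dK$ coefficients per output. Its spline error is therefore $d\,C_pM(d/W_{\mathrm{KAN}})^{p}=O(W_{\mathrm{KAN}}^{-p})$. The dimension $d$ enters only polynomially through the constant, not through the exponent. The MLP rate is $W^{-p/d}$, so in the regime $\epsilon_{\mathrm{int}}\ll W^{-p/d}$ the KAN parameter count needed to reach accuracy $\varepsilon$ scales as $\varepsilon^{-1/p}$ instead of $\varepsilon^{-d/p}$.

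\textbf{Main obstacle.} The difficulty is not technical but one of honesty about what is being compared. The MLP bound is an upper bound, so it cannot show that MLPs are worse. To make the remark sharp I would note the matching lower bound of DeVore--Howard--Micchelli for continuous parameter selection, or of Yarotsky via VC dimension, which is $W^{-p/d}$ up to logarithmic factors over the full $C^p$ class. This shows the comparison is between worst-case rates for generic $d$-variate targets and the structured additive-plus-residual class. The proposition claims only this "contextual" comparison, and I would phrase the conclusion accordingly.
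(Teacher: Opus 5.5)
Your structure matches the paper's. The paper gives no separate argument for Proposition~\ref{prop:mlp_comparison}: it quotes the generic MLP rate as ``standard'' and reuses Theorem~\ref{thm:euler_kan_bound} for the second display. Your absorption of $f_{i,0}$ into an edge is in fact more careful than the paper's proof of that theorem, which writes $f_i=G_i+r_i$ and drops the constant term.

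\textbf{First display.} Your argument does not prove it as stated. Yarotsky's 2017 construction gives $\inf_g\|f_i-g\|_\infty\le C\,(W/\log W)^{-p/d}$. The factor $(\log W)^{p/d}$ is unbounded, so it cannot be folded into $C_{\mathrm{MLP}}$ ``for the range of $W$ considered''. Doing so makes the constant depend on $W$, which the statement forbids. Cite a stronger deep-network bound instead, e.g.\ Lu, Shen, Yang and Zhang (2021): width $O(N\log N)$ and depth $O(L\log L)$ give error $O(\|f\|_{C^p}N^{-2p/d}L^{-2p/d})$. With $N$ fixed this yields $C(W/\log W)^{-2p/d}\le C'W^{-p/d}$ for all $W$, with no logarithm.

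The Whitney/Stein step is also an extra hypothesis, not a formality. Take an annulus with a radial slit removed, and a smooth function of the polar angle that jumps across the slit. It has bounded derivatives, yet no continuous network has uniform error below half the jump, so the displayed rate fails. You must assume $\mathcal{X}$ is an extension domain, e.g.\ Lipschitz, or simply take $f_i\in C^p$ on the enclosing box $\prod_j[a_j,b_j]$ of Theorem~\ref{thm:euler_kan_bound}.

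\textbf{The ``main obstacle'' discussion.} For ReLU networks of unrestricted depth, the VC-dimension lower bound is of order $W^{-2p/d}$ up to logarithms. It is attained by the super-convergence constructions above. $W^{-p/d}$ is a lower bound only under continuous parameter selection (DeVore--Howard--Micchelli), or when depth grows at most polylogarithmically. So the first display is not a matching rate for general MLPs.

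More importantly, no lower bound over the full $C^p$ ball says anything about the near-additive targets at issue. A ReLU MLP can run $d$ univariate subnetworks in parallel and sum them. This approximates $\sum_j f_{i,j}(x_j)$ with error $\mathrm{poly}(d)\,W^{-p}$ up to logarithms, the same $d$-independent exponent you credit to KAN. Your ``$\varepsilon^{-1/p}$ instead of $\varepsilon^{-d/p}$'' therefore compares guarantees on two different function classes, not the two architectures on the same target. It also holds only for $\varepsilon>\epsilon_{\mathrm{int}}$, with $\varepsilon-\epsilon_{\mathrm{int}}$ in place of $\varepsilon$.

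The proposition's closing sentence can only be supported as an informal contextual remark. Keep your final phrasing, but drop the claim that a lower bound makes it sharp.
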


This analysis explains why KAN+Euler is effective in the practical setting targeted by this work. Bounded Euler ranges remove periodic ambiguity, branch-consistent axis selection avoids singularities, and the remaining regression problem often exhibits a low-interaction structure over compact intervals. KAN matches this structure through learnable univariate spline functions on edges, while standard MLPs must learn the corresponding nonlinearities through repeated compositions of fixed activations.

\section{Experiments}
\label{sec:experiment}

To validate the proposed framework, we evaluate on a range of real-world rotation-regression tasks. We first characterize accuracy, efficiency, and convergence under controlled rotation ranges, using object rotation estimation in Section~\ref{sec:experiment_pointcloud}, and complement this analysis with Appendix~\ref{sec:experiment_rotmat}, which removes the downstream task by regressing rotations directly from rotation matrices. We then apply the framework to articulated systems: robotic-arm inverse kinematics in Section~\ref{sec:experiment_robot} and human-hand inverse kinematics in Section~\ref{sec:experiment_hand}, with a full-body articulation study in Appendix~\ref{sec:experiment_body} that extends these gains to a high-DoF model. In each section, we first present the experimental setup and then discuss the results. Additional implementation details are reported in Appendix~\ref{appx:implementation}.

\noindent\textbf{Evaluation Metrics.} Our primary metric is the Mean Angle Error (\textbf{MAE}), the average absolute difference between predicted and ground-truth Euler angles in degrees, with each angular difference wrapped to $(-\pi,\pi]$. For models trained with alternative parameterizations such as axis-angle (\textbf{AA}), quaternion, or the 6D representation, we follow common practice~\cite{xia2025reconstructing} and convert the predicted rotations to Euler angles under the same convention before computing the MAE. We further report the geodesic error (\textbf{GE}), the angle of the relative rotation between prediction and ground truth on $\mathrm{SO}(3)$, in degrees. For the inverse-kinematics tasks, we additionally report the forward-kinematics position error (\textbf{FKE}) in cm and the success rate under a $1$\,cm threshold (\textbf{SR@1cm}).

\subsection{Object Rotation}
\label{sec:experiment_pointcloud}

\noindent\textbf{Setup.}
We estimate an object's 3D orientation from its point cloud, using the \emph{chair} category of ModelNet10~\cite{wu2015modelnet} with 889 training and 100 test shapes. A shared PointNet-style encoder~\cite{qi2017pointnet} maps a canonical cloud and a rotated copy to global features, which are decoded to the relative rotation under the chosen representation and trained with a geodesic loss on $\mathrm{SO}(3)$. A division factor $\mathrm{div}$ scales each Euler-angle range by $1/\mathrm{div}$, bounding the gimbal-prone middle axis to $[-\pi/(2\,\mathrm{div}),\pi/(2\,\mathrm{div})]$. The accuracy comparison matches all models at ${\sim}1.44$M parameters, while the efficiency study varies model size; all reported results are averaged over 3 random seeds.

\begin{table}[tb]
\centering
\caption{\textbf{Object pose estimation.} ``Rep.'' refers to the employed rotation representation, and ``Params.'' to the number of model parameters.}
\label{tab:pointcloud}
\setlength{\tabcolsep}{2.5pt}
\begin{subtable}[t]{0.5\linewidth}
\centering
\caption{Accuracy comparison at $\mathrm{div}{=}1.3$}
\begin{tabular}{lccc}
\toprule
Model & Rep. & MAE $\downarrow$ & GE $\downarrow$ \\
\midrule
MLP & 6D & $4.02${\scriptsize$\pm.11$} & $7.01$ \\
MLP & Euler & $4.08${\scriptsize$\pm.06$} & $7.94$ \\
MLP & AA & $4.37${\scriptsize$\pm.11$} & $8.02$ \\
KAN & 6D & $3.03${\scriptsize$\pm.22$} & $5.33$ \\
KAN & AA & $2.57${\scriptsize$\pm.16$} & $4.78$ \\
\midrule
\textbf{KAN} & \textbf{Euler} & $\mathbf{2.40}${\scriptsize$\pm.20$} & $\mathbf{4.60}$ \\
\bottomrule
\end{tabular}
\end{subtable}%
\hfill
\begin{subtable}[t]{0.46\linewidth}
\centering
\caption{Efficiency comparison}
\begin{tabular}{lccc}
\toprule
Model & Params. & FLOPs & MAE $\downarrow$ \\
\midrule
MLP+6D & 0.42M & 0.56M & $7.18$ \\
MLP+6D & 0.73M & 1.18M & $5.56$ \\
MLP+6D & 1.46M & 2.63M & $4.02$ \\
MLP+6D & 3.29M & 6.30M & $3.67$ \\
\midrule
\textbf{KAN+Euler} & 0.45M & 0.51M & $\mathbf{2.98}$ \\
\bottomrule
\end{tabular}
\end{subtable}
\end{table}

\noindent\textbf{Accuracy.}
Table~\ref{tab:pointcloud}(a) reports accuracy across representations. KAN lowers GE under
every representation relative to its MLP counterpart: Euler from $7.94$ to $4.60$,
6D from $7.01$ to $5.33$, and AA from $8.02$ to $4.78$, so learnable univariate
activations benefit rotation regression regardless of the output space. The gain is largest
for the proposed KAN+Euler, which attains the best accuracy on both metrics ($2.40$ MAE,
$4.60$ GE), $40\%$ and $34\%$ below the widely adopted MLP+6D baseline ($4.02$, $7.01$),
and ahead of KAN+AA ($2.57$, $4.78$) and KAN+6D ($3.03$, $5.33$): under bounded
ranges, the native Euler coordinates expose the near-additive structure of
Section~\ref{subsec:theoretical_analysis} that KAN's splines are suited to fit, a structure
the overparameterized 6D output leaves implicit. Conversely, MLP+Euler ($7.94$ GE) is worse
than MLP+6D ($7.01$): with fixed activations, the angular nonlinearity of Euler coordinates
becomes a liability rather than an inductive bias. Accuracy is thus governed by the pairing
of representation and architecture, not by either choice alone.

\noindent\textbf{Efficiency.}
Table~\ref{tab:pointcloud}(b) scales the MLP+6D baseline by widening its hidden layers, reporting the
FLOPs of the representation head, since the shared PointNet encoder is common to all models. A compact
KAN+Euler, at $0.45$M parameters and $0.51$M head FLOPs, uses fewer FLOPs than even the smallest MLP+6D, yet
attains $2.98$ MAE, lower than MLP+6D at every budget, including the $7.3\times$ larger $3.29$M model
that reaches only $3.67$ at $6.30$M FLOPs. The gain therefore stems from the representation--architecture
match rather than model capacity, further demonstrating the efficiency of~our~framework.

\begin{figure}[tb]
\centering
\begin{subfigure}[b]{0.46\textwidth}\centering
\includegraphics[width=\textwidth]{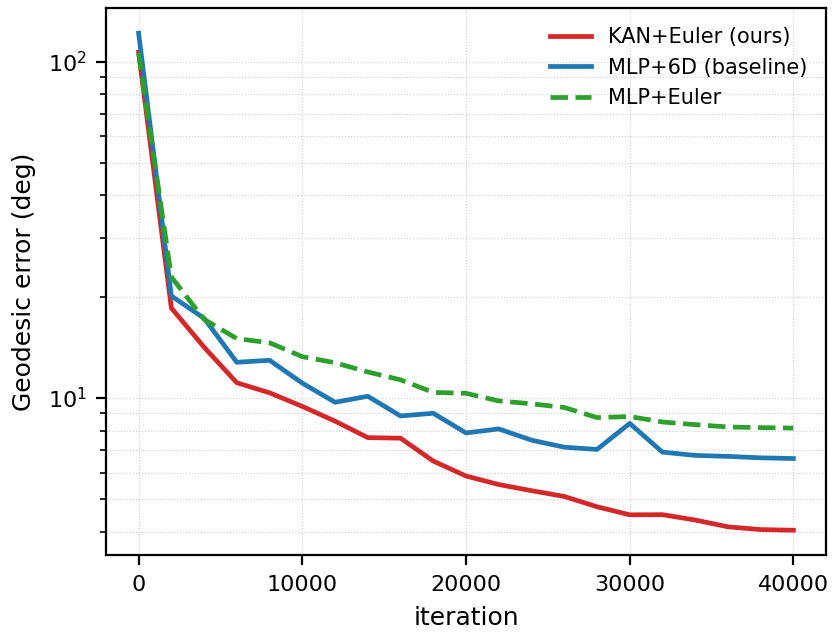}
\caption{Optimization convergence}\end{subfigure}
\hfill
\begin{subfigure}[b]{0.46\textwidth}\centering
\includegraphics[width=\textwidth]{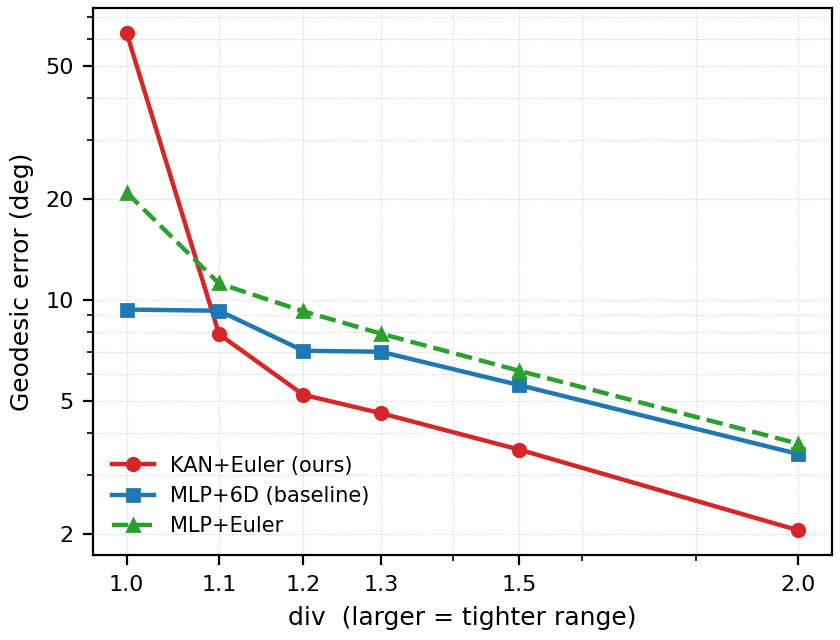}
\caption{Effect of range}\end{subfigure}
\caption{\textbf{Object pose estimation.}
\textbf{(a)} GE vs.\ training iteration for KAN+Euler, MLP+6D, and MLP+Euler at
$\mathrm{div}{=}1.3$. \textbf{(b)} GE vs.\ $\mathrm{div}$. Axes are log-scaled.}
\label{fig:pc_fig}
\end{figure}

\noindent\textbf{Convergence and Effect of Range.}
Figure~\ref{fig:pc_fig}(a) shows that KAN+Euler converges faster and to a lower error than the
matched MLP+6D, while MLP+Euler remains the slowest and least accurate throughout, consistent
with the accuracy analysis: a fixed-activation MLP cannot exploit the angular structure of the
Euler targets. Figure~\ref{fig:pc_fig}(b) identifies when the advantage holds: at the full range
($\mathrm{div}{=}1$) KAN+Euler performs poorly because sampled rotations reach the $\pm\pi/2$
gimbal set, where the Euler chart itself is ill-behaved; once the range excludes this
singularity, KAN+Euler crosses below MLP+6D near $\mathrm{div}{=}1.1$, and the margin widens to
$1.6\times$ by $\mathrm{div}{=}2$ as tighter ranges make the target increasingly near-additive.
The gains therefore track the bounded-range mechanism of
Section~\ref{subsec:theoretical_analysis} rather than any~task-specific~tuning.

\subsection{Robotic Arm Articulation}
\label{sec:experiment_robot}

\begin{table}[!b]
\centering
\caption{\textbf{Franka Panda IK results.}}
\label{tab:franka}
\setlength{\tabcolsep}{2.5pt}
\begin{subtable}[t]{0.44\linewidth}
\centering
\caption{Comparison across active DoFs}
\footnotesize
\begin{tabular}{clccc}
\toprule
DoF & Model & Rep. & FKE\,$\downarrow$ & SR@1cm\,$\uparrow$ \\
\midrule
\multirow{3}{*}{3}
 & MLP & Euler & 1.16 & 52.5 \\
 & MLP & 6D & 0.83 & 71.6 \\
 & \textbf{KAN} & \textbf{Euler} & \textbf{0.16} & \textbf{100.0} \\
\midrule
\multirow{3}{*}{4}
 & MLP & Euler & 1.94 & 20.2 \\
 & MLP & 6D & 1.15 & 50.0 \\
 & \textbf{KAN} & \textbf{Euler} & \textbf{0.44} & \textbf{96.8} \\
\midrule
\multirow{3}{*}{5}
 & MLP & Euler & 11.65 & 0.3 \\
 & MLP & 6D & 7.94 & 1.0 \\
 & \textbf{KAN} & \textbf{Euler} & \textbf{6.07} & \textbf{1.5} \\
\bottomrule
\end{tabular}
\end{subtable}
\hfill
\begin{subtable}[t]{0.54\linewidth}
\centering
\caption{Workspace constraints (3 DoFs)}
\footnotesize
\begin{tabular}{lccc}
\toprule
Method & Box & FKE\,$\downarrow$ & SR@1cm\,$\uparrow$ \\
\midrule
\multirow{3}{*}{MLP+6D}
 & Full & 0.788 & 74.0 \\
 & $0.7\times$ & 0.307 & 99.8 \\
 & $0.5\times$ & 0.239 & 99.9 \\
\midrule
\multirow{3}{*}{\textbf{KAN+Euler}}
 & Full & \textbf{0.168} & \textbf{100.0} \\
 & $0.7\times$ & \textbf{0.140} & \textbf{100.0} \\
 & $0.5\times$ & \textbf{0.082} & \textbf{100.0} \\
\bottomrule
\end{tabular}
\end{subtable}
\end{table}

\noindent\textbf{Setup.}
We evaluate on the Franka Panda robotic arm, a 7-revolute-joint manipulator widely used in manipulation research. Each revolute joint angle $q_i$ is a bounded 1D Euler rotation about its fixed joint axis, so the regression target (the vector of joint angles) is itself a product of range-constrained Euler coordinates, and the end-effector rotation (given to the network in the compared representation) is their forward-kinematic composition. Given the end-effector pose (3D position and rotation matrix), networks are trained to predict the active joint angles on 1M sampled~pose--configuration~pairs. To evaluate our approach across task difficulty, we vary the number of active degrees of freedom (DoFs) from 3 to 5; higher DoFs are not evaluated because the inverse problem becomes increasingly ambiguous under a fixed end-effector observation~\cite{bensadoun2022neural}. To ensure a fair comparison, all models are matched to ${\sim}$100K parameters, trained under identical settings, and averaged over 3 random seeds.

\noindent\textbf{Comparison Across Active DoFs.} As shown in Table~\ref{tab:franka}(a), the proposed KAN+Euler attains the lowest FKE at every DoF count. With 3 active DoFs it achieves 0.16\,cm at 100.0\% SR@1cm, a $5.2\times$ lower error than MLP+6D (0.83\,cm, 71.6\%); with 4 DoFs it reaches 0.44\,cm at 96.8\%, against 1.15\,cm at 50.0\%. At 5 DoFs the inverse problem is near-degenerate~\cite{bensadoun2022neural}: KAN+Euler remains the most accurate (6.07\,cm vs.\ 7.94\,cm for MLP+6D and 11.65\,cm for MLP+Euler), yet SR@1cm is low for all methods (at most 1.5\%) at this threshold. The relative advantage thus grows monotonically as fewer joints are active ($1.3\times$ at 5 DoFs, $2.6\times$ at 4, $5.2\times$ at 3), matching our analysis: bounded joint ranges keep Euler regression away from singularities, and KAN's additive spline form best fits the resulting near-additive structure of the joint angles.

\noindent\textbf{Advantage Under Workspace Constraints.} Real deployments rarely use the full reachable workspace, which further tightens the effective joint ranges. We therefore test robustness to workspace size, re-running the 3-DoF setting at the default box and at boxes shrunk to $0.7\times$ and $0.5\times$ its extent, at the same 1M-sample budget. As Table~\ref{tab:franka}(b) shows, KAN+Euler is the most accurate at every workspace, with $100\%$ SR@1cm and a consistent $2.2$--$4.7\times$ lower FKE than MLP+6D, whereas MLP+6D becomes reliable only as the box tightens, its SR@1cm falling to $74\%$ at the default box. A tighter workspace, as in real robotic deployments, shrinks the effective joint ranges into precisely the bounded regime that our method is designed to exploit.

\subsection{Biomechanical Hand Articulation}
\label{sec:experiment_hand}

\begin{table}[!b]
\centering
\caption{\textbf{Biomechanical inverse kinematics on the human hand.}
``Constr.'' indicates whether the practical guidelines introduced in Section~\ref{subsec:practical_guidelines} are applied to mitigate singularities and discontinuities. ``Data'' denotes the percentage of the training set used in the data-efficiency study.}
\label{tab:hand_articulation}
\setlength{\tabcolsep}{4pt}
\begin{subtable}[t]{0.48\linewidth}
\centering
\caption{Accuracy comparison}
\begin{tabular}{lcccc}
\toprule
Model & Rep. & Constr. & MAE $\downarrow$ & GE $\downarrow$ \\
\midrule
MLP & 6D & - & $2.72${\scriptsize$\pm.03$} & $4.90$ \\
MLP & Euler & \xmark & $3.72${\scriptsize$\pm.02$} & $6.28$ \\
MLP & Euler & \cmark & $3.15${\scriptsize$\pm.02$} & $5.64$ \\
MLP & AA & - & $2.68${\scriptsize$\pm.05$} & $4.89$ \\
\midrule
KAN & 6D & - & $2.86${\scriptsize$\pm.07$} & $5.22$ \\
KAN & Euler & \xmark & $3.10${\scriptsize$\pm.03$} & $5.14$ \\
KAN & AA & - & $2.69${\scriptsize$\pm.03$} & $4.90$ \\
\midrule
\textbf{KAN} & \textbf{Euler} & \cmark & $\mathbf{2.53}${\scriptsize$\pm.03$} & $\mathbf{4.61}$ \\
\bottomrule
\end{tabular}
\end{subtable}
\hfill
\begin{subtable}[t]{0.48\linewidth}
\centering
\caption{Training data-efficiency analysis}
\begin{tabular}{lcc}
\toprule
Method & Data & MAE $\downarrow$ \\
\midrule
MLP+6D & 10 & $2.94${\scriptsize$\pm.05$}  \\
MLP+6D & 50 & $2.81${\scriptsize$\pm.06$}  \\
MLP+6D & 100 & $2.72${\scriptsize$\pm.03$}  \\
\midrule
KAN+Euler & 10 & $2.73${\scriptsize$\pm.07$}  \\
KAN+Euler & 50 & $2.58${\scriptsize$\pm.03$}  \\
\textbf{KAN+Euler} & 100 & $\mathbf{2.53}${\scriptsize$\pm.03$}  \\
\bottomrule
\end{tabular}
\end{subtable}
\end{table}

\noindent\textbf{Setup.}
Following established protocols in biomechanical inverse kinematics analysis~\cite{xia2025reconstructing}, we train neural networks to estimate joint rotation angles from sparse observations, namely 3D hand joint positions. Specifically, we employ the FreiHand dataset~\cite{zimmermann2019freihand}, which features a diverse collection of hand poses from multiple subjects. The dataset is annotated using the widely adopted MANO hand model~\cite{romero2017embodied}, which captures 21 3D hand joint positions and 15 articulated joints (45 rotation angles). It contains 33K training and~4K~testing~samples.

To ensure a fair comparison, we evaluate MLP and KAN under matched capacity (approximately 34K parameters), averaging over 3 seeds. We compare rotation representations and isolate the effect of the range-constraint guidelines of Section~\ref{subsec:practical_guidelines}, which the Constraint column of Table~\ref{tab:hand_articulation}(a)~\mbox{toggles}.

\noindent\textbf{Effect of Range Constraints.} Table~\ref{tab:hand_articulation}(a) shows that regressing Euler angles without mitigating discontinuities and singularities degrades accuracy: unconstrained MLP+Euler reaches an MAE of 3.72, well behind MLP+6D at 2.72. Applying our proposed range constraints and constraint-aware axis ordering from Section~\ref{subsec:practical_guidelines} lowers this to 3.15, recovering most of the gap to MLP+6D with no change to the architecture. The weakness of unconstrained Euler regression therefore comes from these discontinuities and singularities, which our range constraints avoid by keeping the angles within their bounded operating ranges, a standalone, architecture-agnostic gain from which even a~plain~\mbox{MLP}~\mbox{benefits}.

\noindent\textbf{Effectiveness of KAN for Euler Regression.} KAN is markedly more effective than MLP at regressing Euler angles. Even without constraints, KAN+Euler reaches 3.10, improving on MLP+Euler by 16.7\% because its learnable univariate activations fit the bounded, near-additive structure of the joint-angle targets that a fixed-activation MLP cannot exploit. Combined with the range constraints, KAN+Euler attains the best accuracy of 2.53, ahead of KAN+6D (2.86) by 11.5\% and of the strongest MLP baseline, MLP+AA (2.68), by 5.6\%. The two contributions are complementary: the constraints make the Euler target well-behaved, and the KAN architecture exploits the resulting structure.

\noindent\textbf{Training Data Efficiency.} Table~\ref{tab:hand_articulation}(b) compares KAN+Euler against MLP+6D as the training set shrinks. KAN+Euler is more accurate at every budget, and at only 10\% of the data it attains 2.73, better than MLP+6D at the same fraction (2.94) and on par with MLP+6D trained on the full set (2.72). The representation--architecture match thus improves sample efficiency: because the additive spline hypothesis space already matches the target structure, fewer examples are needed to fit it, a valuable property in biomechanical settings where labeled data are typically scarce.

\section{Conclusion}
We revisited Euler-angle regression and showed that the common preference for redundant, continuous parameterizations is unnecessary in the bounded-range regime of real articulated systems. The difficulty of regressing Euler angles arises not from the coordinates themselves but from a mismatch between fixed-activation networks and their angular structure. We resolved this mismatch on three fronts: introducing Kolmogorov-Arnold Networks (KAN) as a functionally adaptive backbone that exploits the near-additive structure of bounded Euler targets; leveraging practical range constraints with a constraint-aware axis ordering to remove discontinuities and singularities; and providing a theoretical analysis characterizing when the resulting target favors KAN's additive spline form. Across controlled rotation regression, object pose estimation, and robotic and human inverse kinematics, the framework consistently improves accuracy, convergence, and efficiency over the widely adopted MLP+6D baseline, while operating directly in the interpretable native coordinates of each system. Our analysis assumes range constraints on individual joints; extending it to richer structure, such as inter-joint synergies and coupled kinematic constraints, is a promising direction for integrating structural priors into~\mbox{rotation}~regression.

\bibliography{main}
\bibliographystyle{tmlr}

\clearpage
\appendix
\begin{center}
{\large\bfseries Revisiting Euler-Angle Regression with Kolmogorov-Arnold Networks}\\[0.4em]
{\large Supplementary Material}
\end{center}
\vspace{0.8em}

\noindent In this supplementary material, we first provide additional analysis and specifications underlying our method:
\begin{itemize}\setlength{\itemsep}{2pt}
  \item \textbf{Appendix~\ref{appx:singularity}}: Per-Joint Range Specifications and Singularity Handling.
  \item \textbf{Appendix~\ref{appx:theoretical_analysis}}: Theoretical Analysis of Euler-Angle Regression with KAN.
\end{itemize}
\noindent We then report additional experiments:
\begin{itemize}\setlength{\itemsep}{2pt}
  \item \textbf{Appendix~\ref{sec:experiment_rotmat}}: Controlled Rotation Regression.
  \item \textbf{Appendix~\ref{sec:experiment_body}}: Biomechanical Body Articulation.
\end{itemize}
\noindent Finally, we detail the experimental setup:
\begin{itemize}\setlength{\itemsep}{2pt}
  \item \textbf{Appendix~\ref{appx:implementation}}: Additional Implementation Details.
\end{itemize}
\vspace{0.5em}

\suppressfloats[t]

\input{appendix}
\end{document}

%% file: math_commands.tex
\usepackage{amsmath,amsfonts,bm}

\def\eqref#1{equation~\ref{#1}}

\def\1{\bm{1}}

\DeclareMathAlphabet{\mathsfit}{\encodingdefault}{\sfdefault}{m}{sl}
\SetMathAlphabet{\mathsfit}{bold}{\encodingdefault}{\sfdefault}{bx}{n}



%% file: appendix.tex
\section{Per-Joint Range Specifications and Singularity Handling}
\label{appx:singularity}

This appendix specifies the admissible per-joint rotation ranges of the human
hand and body, drawn from biomechanical literature, and details how the
axis-assignment strategy of Section~\ref{subsec:practical_guidelines} renders the
Euler chart free of gimbal lock for every joint whose most-constrained axis lies
strictly within $(-\tfrac{\pi}{2},\tfrac{\pi}{2})$---every joint we model except the
shoulder, whose axial-rotation axis reaches the $\pm\tfrac{\pi}{2}$ boundary (discussed below). For each
joint we report its rotational degrees of freedom (DoF) together with the
admissible interval of each axis, highlighting the axis assigned to the
gimbal-prone middle position.

\begin{table}[t]
\centering
\footnotesize
\setlength{\tabcolsep}{3.5pt}
\renewcommand{\arraystretch}{1.05}
\caption{\textbf{Per-joint DoF and admissible Euler-axis ranges} (degrees), compiled from
biomechanical literature~\cite{zhang2023body,zhang2024weakly,delp2007opensim}
for the hand and body. The \textbf{middle} axis is the most-constrained axis of each
joint; it lies within $(-90^\circ,90^\circ)$ for all joints except the shoulder, whose
axial-rotation axis reaches the $\pm90^\circ$ boundary. ``--'' marks axes that are inactive for the given joint.}
\label{tab:joint_ranges}
\begin{tabular}{@{}lccc@{}}
\toprule
Joint (DoF) & First axis & \textbf{Middle axis} & Third axis \\
\midrule
\multicolumn{4}{@{}l}{\textit{Hand}}\\
MCP (2) & $[-45,90]$ & $\mathbf{[-10,10]}$ & $[-20,20]$ \\
PIP (1) & $[-5,110]$ & --                  & -- \\
DIP (1) & $[-10,80]$ & --                  & -- \\
\midrule
\multicolumn{4}{@{}l}{\textit{Body}}\\
Knee (1)     & $[0,140]$   & --                  & --        \\
Elbow (1)    & $[0,146]$   & --                  & --        \\
Forearm (1)  & $[-80,90]$  & --                  & --        \\
Ankle (1)    & $[-20,50]$  & --                  & --        \\
Wrist (2)    & $[-70,80]$  & $\mathbf{[-25,35]}$ & --        \\
Hip (3)      & $[-30,120]$ & $\mathbf{[-45,30]}$ & $[-45,45]$ \\
Spine$^\ast$ (3) & $[-35,70]$ & $\mathbf{[-20,20]}$ & $[-30,30]$ \\
Scapula (3)  & $[-30,30]$ & $\mathbf{[-10,40]}$ & $[0,60]$ \\
Shoulder (3) & $[0,180]$    & $\mathbf{[-70,90]}$ & $[-60,180]$ \\
\bottomrule
\end{tabular}
\\[3pt]
{\footnotesize $^\ast$Lumbar segment (middle axis = axial rotation); the thoracic
and cervical segments are analogous.}
\end{table}

\subsection{Hand}
The MANO hand model~\cite{romero2017embodied} represents the hand as a kinematic
tree of $16$ joints rooted at the wrist: five fingers, each articulated by three
joints. The four fingers (index, middle, ring, and little) each comprise a
metacarpophalangeal (MCP), a proximal interphalangeal (PIP), and a distal
interphalangeal (DIP) joint, while the thumb comprises the carpometacarpal,
metacarpophalangeal, and interphalangeal joints. The DIP and PIP joints are
predominantly single-DoF flexion joints, whereas the MCP joints additionally
admit abduction; in all cases the longitudinal twist is tightly constrained about
zero~\cite{zhang2024weakly}. We adopt the intrinsic $ZXY$ convention, which places
this twist axis at the middle position. As Table~\ref{tab:joint_ranges} shows, the
middle-axis interval remains well inside $(-\tfrac{\pi}{2},\tfrac{\pi}{2})$ for
every finger-joint family, while the larger flexion ranges occupy the outer first
and third positions where no discontinuity or singularity can arise.

\subsection{Body}
The SKEL model~\cite{keller2024skel} represents the body as a kinematic tree of $24$ joints rooted at the pelvis. We focus on the major articulated joints: the
legs (hip, knee, ankle), the spine (lumbar, thoracic, and cervical), and the arms
(scapula, shoulder, elbow, forearm, and wrist). These joints span
three DoF classes: single-DoF hinges (knee, ankle, elbow, and forearm
pronation/supination), the two-DoF wrist, and three-DoF spinal and ball joints (hip, lumbar, thoracic, cervical, scapula, and shoulder). Under the anatomical
joint limits established in the biomechanical literature~\cite{zhang2023body,
delp2007opensim}, every multi-DoF joint has a most-constrained axis that we place at the middle position: ab/adduction for the hip, axial rotation for the spinal segments, and elevation for the scapula, each lying within $(-\tfrac{\pi}{2},\tfrac{\pi}{2})$. The shoulder is the sole exception: its most-constrained axis, axial rotation, reaches $\pm\tfrac{\pi}{2}$, making it the most singularity-prone joint we model. The single-DoF hinges are
singularity-free under any axis assignment.

Notably, for both the hand and the body these anatomical limits are far more
restrictive than the crossover range at which KAN+Euler overtakes the 6D
representation in our controlled study (near $\mathrm{div}{=}1.3$; at $\mathrm{div}{=}2$, i.e.\ a bounded axis within
$(-\tfrac{\pi}{4},\tfrac{\pi}{4})$ and the remaining axes within $(-\tfrac{\pi}{2},\tfrac{\pi}{2})$, the advantage already reaches $3.9\times$): most middle
axes are confined to a small fraction of $(-\tfrac{\pi}{2},\tfrac{\pi}{2})$, and the flexion ranges of the hinge joints (elbow $146^\circ$, knee $140^\circ$) stay below a half-turn. Real articulated systems therefore operate firmly inside the regime where the proposed parameterization is most advantageous.

\section{Theoretical Analysis of Euler-Angle Regression with KAN}
\label{appx:theoretical_analysis}

\noindent\textbf{Proof of Lemma~\ref{lem:euler_interaction_small}.}
Since $f_i\in C^p(\mathcal{X})$ ($p\ge3$, hence $f_i\in C^2$) and $\mathcal{X}\subset\mathbb{R}^d$ is bounded, the functional ANOVA decomposition exists and can be written as
\[
f_i(x)
=
f_{i,0}
+
\sum_{j=1}^d f_{i,j}(x_j)
+
\sum_{|S|>1} f_{i,S}(x_S),
\]
where the terms $\{f_{i,S}\}$ are orthogonal interaction components defined with respect to the product measure on $\mathcal{X}$ (see, e.g.,~\cite{rabitz1999general,hooker2007generalized}).
For twice continuously differentiable functions on a bounded domain, the magnitude of higher-order interaction components can be controlled by mixed second derivatives. In particular, standard ANOVA--Sobolev bounds imply
\begin{equation}
\sum_{|S|>1} \|f_{i,S}\|_\infty
\le
C_{\mathcal{X}}
\max_{j\neq k}
\|\partial_{jk}^2 f_i\|_\infty,
\end{equation}
where $C_{\mathcal{X}}$ depends only on the geometry of the domain $\mathcal{X}$ (see, e.g.,~\cite{rabitz1999general,hooker2007generalized}).

We then relate this interaction view to the bounded Euler chart considered in the paper. In a fixed Euler convention, the Euler-to-rotation map is a product of elemental rotations. For a local displacement $\mathbf{u}$ from a reference Euler angle, the Baker--Campbell--Hausdorff expansion gives
\begin{equation}
\prod_i \exp(u_iG_i)
=
\exp\left(
\sum_i u_iG_i
+
\frac{1}{2}\sum_{j<k}u_ju_k[G_j,G_k]
+
O(\|\mathbf{u}\|^3)
\right).
\end{equation}
The first-order term is additive across the Euler coordinates, while cross-coordinate coupling enters only through the second-order commutator terms, which are $O(\|\mathbf{u}\|^2)$ in the chart half-width. This near-additive structure of the Euler-to-rotation map motivates modeling the regression target as an additive component plus a residual interaction term, whose interaction mass is controlled by the bound above. We denote this residual interaction mass by
\begin{equation}
\epsilon_{\mathrm{int}}
:=
\max_{i}\sum_{|S|>1}\|f_{i,S}\|_\infty .
\end{equation}
This completes the interaction-control characterization used in the main text.

\noindent\textbf{Proof of Theorem~\ref{thm:euler_kan_bound}.}
Decompose
\[
f_i(x)=G_i(x)+r_i(x),
\qquad
G_i(x):=\sum_{j=1}^d f_{i,j}(x_j),
\qquad
r_i(x):=\sum_{|S|>1} f_{i,S}(x_S).
\]
By Lemma~\ref{lem:euler_interaction_small},
\[
\|r_i\|_\infty
=
\Big\|\sum_{|S|>1} f_{i,S}\Big\|_\infty
\le
\sum_{|S|>1}\|f_{i,S}\|_\infty
\le
\epsilon_{\mathrm{int}} .
\]
For each coordinate $j$, classical spline approximation theory implies that for any
$f_{i,j}\in C^p([a_j,b_j])$ there exists a degree-$p$ spline $s_{i,j}$ with $K$ basis
functions such that
\[
\|f_{i,j}-s_{i,j}\|_\infty
\le
C_p\,M\,K^{-p},
\]
where $C_p$ depends only on $p$ and the interval length $b_j-a_j$
(see, e.g.,~\cite{deboor2001practical}).
Define the additive spline approximation
\[
\tilde f_i(x)
:=
\sum_{j=1}^d s_{i,j}(x_j),
\]
which belongs to $\mathcal{F}_{\mathrm{KAN}}(K,p)$ by construction.
Then
\[
\|G_i-\tilde f_i\|_\infty
=
\Big\|\sum_{j=1}^d (f_{i,j}-s_{i,j})\Big\|_\infty
\le
\sum_{j=1}^d \|f_{i,j}-s_{i,j}\|_\infty
\le
d\,C_p\,M\,K^{-p}.
\]
Finally,
\[
\|f_i-\tilde f_i\|_\infty
=
\|G_i+r_i-\tilde f_i\|_\infty
\le
\|G_i-\tilde f_i\|_\infty+\|r_i\|_\infty
\le
d\,C_p\,M\,K^{-p}
+
\epsilon_{\mathrm{int}},
\]
which proves the bound in Equation~\ref{eq:kan_euler_bound}.

\section{Controlled Rotation Regression}
\label{sec:experiment_rotmat}

\noindent\textbf{Setup.}
Following \citet{zhou2019continuity}, we study rotation regression in a controlled setting while isolating complexity from downstream tasks. Given a $3\times3$ rotation matrix $R\in\mathrm{SO}(3)$ as input, we train neural networks to recover the corresponding ZXY Euler angles. Following the axis-assignment guideline of Section~\ref{subsec:practical_guidelines}, the gimbal-prone middle axis is sampled from $[-\pi/2,\pi/2]$ and the two outer axes from $[-\pi,\pi]$; at $\mathrm{div}=1$ the middle axis spans the full $[-\pi/2,\pi/2]$ and thus touches the $\pm\pi/2$ singularity, while larger $\mathrm{div}$ confines it strictly inside. We generate 500K samples for training and 50K independent samples for testing.

We evaluate KAN and MLP under two rotation parameterizations: Euler angles and the 6D representation. Since MLP with the 6D representation is widely adopted and achieves strong empirical performance for rotation regression, we treat MLP+6D as our primary baseline. We first compare accuracy across representations together with parameter and computation efficiency. Then, we analyze optimization stability and the effect of restricting Euler-angle ranges. A division factor $\mathrm{div}$ that scales each Euler-angle range by $1/\mathrm{div}$ (e.g., $\mathrm{div}=2$ corresponds to half the range) is used to examine how bounded Euler ranges affect results. Unless otherwise stated, results use $\mathrm{div}=2$ and are averaged over 3 seeds.

\noindent\textbf{Accuracy.} We report the accuracy comparison across representations in
Table~\ref{tab:acc_eff}(a). %
Regressing Euler angles with an MLP yields a large MAE of 0.546, reflecting the discontinuities and non-smooth regions of the Euler parameterization. KAN+Euler reduces this to 0.063, a $4.4\times$ improvement over the MLP+6D baseline (0.277). The gain from KAN holds across representations: it also lowers the error to 0.072 for 6D and 0.033 for AA. AA is the lowest in this synthetic setting, which is expected: the network is given the rotation matrix directly, and AA relates to it almost linearly through the matrix logarithm, making it the easiest target to regress. This edge is specific to direct rotation-matrix input and does not transfer to realistic observations. On tasks where the input is task observations rather than the rotation matrix itself, KAN+Euler is the strongest representation: it matches or exceeds AA on the point-cloud (MAE $2.40$ vs.\ $2.57$) and hand ($2.53$ vs.\ $2.69$) benchmarks, and is the best representation on the body task, as shown in Sections~\ref{sec:experiment_pointcloud} and~\ref{sec:experiment_hand} and Appendix~\ref{sec:experiment_body}. We therefore adopt Euler angles as the primary representation: they are the native, interpretable coordinates of the target articulated systems and match or exceed every evaluated alternative on the realistic benchmarks.

\begin{table}[tb]
\centering
\caption{\textbf{Rotation regression accuracy and efficiency.} ``Rep.'' refers to the employed rotation representation, and ``Params.'' to the number of model parameters. Bold marks our proposed KAN+Euler; on this synthetic matrix-input task KAN+AA is numerically lowest (see text).}
\label{tab:acc_eff}
\setlength{\tabcolsep}{4pt}
\begin{subtable}[t]{0.47\linewidth}
\centering
\caption{Accuracy comparison}
\begin{tabular}{lccc}
\toprule
Model & Rep. & MAE $\downarrow$ & GE $\downarrow$ \\
\midrule
MLP & 6D & $0.277${\scriptsize$\pm.008$} & $0.170$ \\
MLP & Euler & $0.546${\scriptsize$\pm.038$} & $0.356$ \\
MLP & AA & $0.363${\scriptsize$\pm.004$} & $0.224$ \\
KAN & 6D & $0.072${\scriptsize$\pm.008$} & $0.043$ \\
KAN & AA & $0.033${\scriptsize$\pm.002$} & $0.021$ \\
\midrule
\textbf{KAN} & \textbf{Euler} & $\mathbf{0.063}${\scriptsize$\pm.010$} & $\mathbf{0.041}$ \\
\bottomrule
\end{tabular}
\end{subtable}%
\hfill
\begin{subtable}[t]{0.5\linewidth}
\centering
\caption{Efficiency comparison}
\begin{tabular}{lcccc}
\toprule
Model & Rep. & Params. & FLOPs & MAE $\downarrow$ \\
\midrule
MLP & 6D & 5,331 & 10,368 & 0.277 \\
MLP & 6D & 21,806 & 43,000 & 0.147 \\
MLP & 6D & 100,766 & 200,200 & 0.081 \\
MLP & 6D & 185,406 & 369,000 & 0.066 \\
\midrule
\textbf{KAN} & \textbf{Euler} & 5,859 & 5,824 & \textbf{0.063} \\
\bottomrule
\end{tabular}
\end{subtable}
\end{table}

\noindent\textbf{Efficiency.} We compare parameter count and computational cost in Table~\ref{tab:acc_eff}(b). KAN+Euler reaches an MAE of 0.063 with only 5,859 parameters and 5,824 FLOPs, whereas MLP+6D at a comparable size yields 0.277; even scaled to 185K parameters and 369K FLOPs, a $32\times$ larger budget, MLP+6D remains at 0.066. The gain is therefore not a capacity effect but stems from the representation--architecture match, consistent with Section~\ref{sec:experiment_pointcloud}. We also report inference cost: a KAN forward pass is slower than the matched MLP+6D ($905\,\mu$s vs.\ $36\,\mu$s), a gap that reflects unfused B-spline kernels in the implementation rather than the parameter count.

\noindent\textbf{Optimization Stability.}
We next verify the convergence advantage of our framework. Training both models under a common angle-space MSE loss, Figure~\ref{fig:loss_conv} shows that KAN+Euler reaches a lower training loss and a lower Euler-angle error and decreases smoothly throughout, whereas MLP+6D plateaus at a higher level. This is exactly what the bounded-range analysis of Section~\ref{subsec:theoretical_analysis} predicts: the singularity-free, near-additive Euler target yields a well-conditioned landscape that KAN's additive spline form fits smoothly. These results support the improved optimization stability of our proposed framework.

\begin{figure}[tb]
\centering
\includegraphics[width=0.99\textwidth]{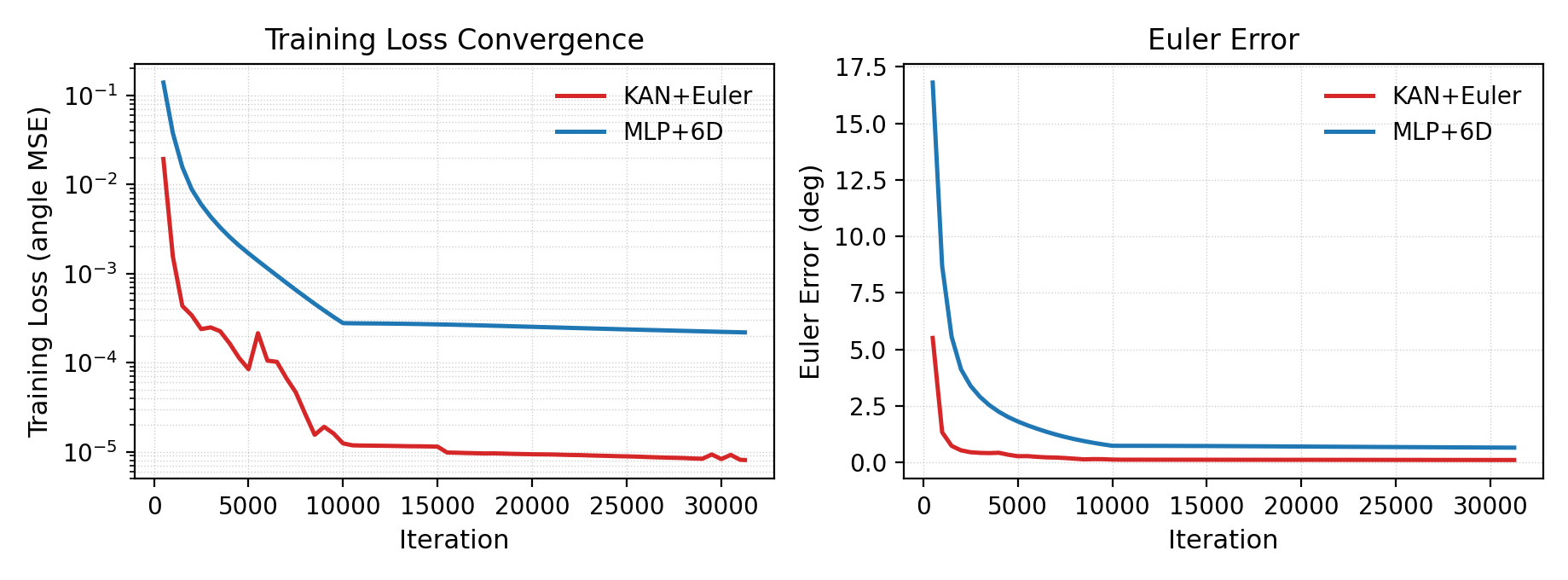}
\caption{\textbf{Optimization on controlled rotation regression.} Both models are trained with the same angle-space MSE loss. Left: training loss (log scale); right: Euler-angle error (MAE, degrees).}
\label{fig:loss_conv}
\end{figure}

\noindent\textbf{Effect of Angle Ranges.}  Finally, we investigate how the Euler-angle range affects regression difficulty by varying $\mathrm{div}$ in Figure~\ref{fig:angle_range}.
\begin{wrapfigure}{r}{0.45\textwidth}
\centering
\includegraphics[width=\linewidth]{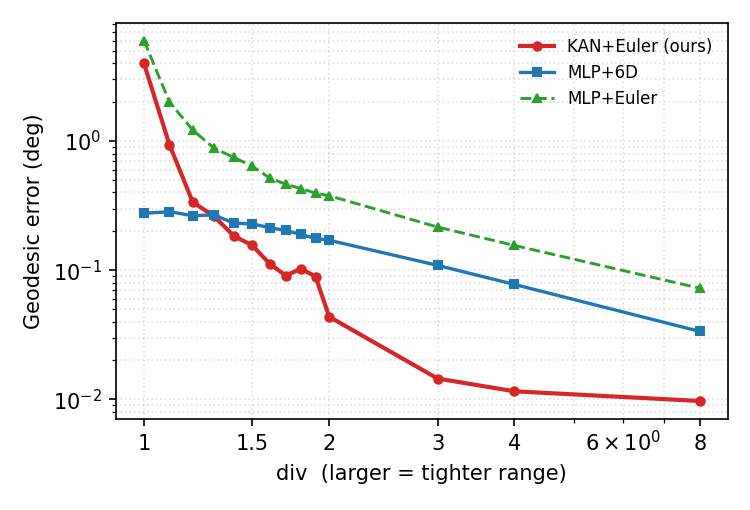}
\caption{\textbf{Effect of Euler-angle range.} GE vs.\ $\mathrm{div}$; axes are log-scaled.}
\label{fig:angle_range}
\end{wrapfigure}
When the full range is used ($\mathrm{div}=1$), both Euler variants perform poorly: KAN+Euler (3.99 GE) and MLP+Euler (6.01) trail MLP+6D (0.28), because samples near the range boundaries still suffer from discontinuities and singularities. As the range tightens, KAN+Euler improves sharply, crossing below MLP+6D near $\mathrm{div}{=}1.3$ and reaching $3.9\times$ lower error by $\mathrm{div}{=}2$ (0.044 vs.\ 0.171 in this single-seed sweep, consistent with Table~\ref{tab:acc_eff}(a)). The advantage peaks at $6.7\times$ ($\mathrm{div}{=}4$) and compresses to $3.5\times$ at $\mathrm{div}{=}8$ as both errors approach the evaluation floor. MLP+Euler improves monotonically but remains the weakest, showing that both the bounded range and the KAN architecture are needed. This supports the analysis in Section~\ref{subsec:theoretical_analysis}: restricting the Euler ranges reduces discontinuities and improves local smoothness, so KAN's additive structure better fits the resulting well-behaved bounded regime.

\section{Biomechanical Body Articulation}
\label{sec:experiment_body}

\noindent\textbf{Setup.}
We finally evaluate on the full-body SKEL model~\cite{keller2024skel}, the most demanding
articulated system we consider: 23 articulated joints along a deep kinematic chain, including
$3$-DoF ball joints (shoulder, hip, lumbar) whose broad coupled ranges sit near the boundary
of the bounded-Euler regime. From AMASS-derived poses~\cite{mahmood2019amass} we regress each
joint's rotation from $3$D joint positions. We operate in the model's \emph{native anatomical
coordinates}: the bounded joint parameters defined by the biomechanical model, with their
per-axis range limits and axis assignments detailed in Appendix~\ref{appx:singularity}, rather than a
generic Euler decomposition that would place an unconstrained axis at the gimbal-prone middle
position for ball joints. Because the chain is deep, we predict each joint from its
\emph{local} kinematic neighborhood (incoming bone, parent context, and child-bone directions)
with a shared per-joint network; this decouples the chain so that each output depends on a
local, near-additive feature set. KAN+Euler and MLP+6D are matched at ${\sim}96$K parameters
and trained over 3 seeds.

\begin{table}[tb]
\centering
\setlength{\tabcolsep}{8pt}
\caption{\textbf{Body articulation on the SKEL model.}}
\label{tab:body}
\begin{tabular}{lcc}
\toprule
Method & MAE $\downarrow$ & GE $\downarrow$ \\
\midrule
MLP+6D & $3.06${\scriptsize$\pm.03$} & $4.40${\scriptsize$\pm.03$} \\
KAN+6D & $3.11${\scriptsize$\pm.06$} & $4.42${\scriptsize$\pm.07$} \\
\textbf{KAN+Euler} & $\mathbf{2.88}${\scriptsize$\pm.06$} & $\mathbf{4.22}${\scriptsize$\pm.09$} \\
\bottomrule
\end{tabular}
\end{table}

\noindent\textbf{Gains Extend to High DoFs.}
KAN+Euler attains the lowest error on both metrics (Table~\ref{tab:body}), and the GE
gain over MLP+6D is consistent across every seed. KAN+6D matches MLP+6D ($4.42$ vs.\ $4.40$),
so the improvement is attributable to the bounded native-Euler
representation, not the KAN backbone alone, mirroring the hand study in Section~\ref{sec:experiment_hand}. The advantage also grows
as the joint ranges are further bounded: scaling each joint's range by $1/\mathrm{div}$ increases
the GE improvement monotonically from $4.1\%$ ($\mathrm{div}{=}1$) to $5.2\%$
($\mathrm{div}{=}3$). The effect is gentler than in the synthetic setting because the native
coordinates are already singularity-free. A per-joint analysis shows the net gain is driven by
the axial forearm-rotation degrees of freedom, which are weakly determined by joint positions and
on which the unconstrained $6$D baseline incurs large errors; the bounded native angle
regularizes these, while on the well-determined joints the two methods are comparable. The
bounded-Euler/KAN design extends from synthetic rotations and point clouds to this high-DoF
biomechanical model, provided rotations are expressed in their native bounded coordinates.

\section{Additional Implementation Details}
\label{appx:implementation}

\subsection{Object Rotation}

We use the \emph{chair} category of ModelNet10~\cite{wu2015modelnet}, with 889 training and 100 test shapes; each shape is sampled to 512 points, centered, and scaled to unit maximum extent. Following a Siamese design, a reference cloud and a rotated copy are each encoded by a shared PointNet-style network: a per-point MLP with widths $3,64,128,1024$ and ReLU activations, max-pooled to a 1024-dimensional global feature. The two features are concatenated and decoded to the relative rotation under the chosen representation. MLP models use a two-hidden-layer head, while KAN models apply a linear projection, layer normalization, and a KAN with degree-3 splines and 10 control points. Target rotations are sampled within the bounded Euler ranges, scaled by a division factor $\mathrm{div}$. The accuracy comparison in Table~\ref{tab:pointcloud}(a) matches all models at ${\sim}1.44$M parameters at $\mathrm{div}=1.3$, whereas the efficiency study in Table~\ref{tab:pointcloud}(b) scales the MLP+6D hidden width against a compact $0.45$M KAN+Euler; the FLOPs reported there count the representation head, since the shared PointNet encoder, at roughly $286$M FLOPs, is common to all models. All models are trained with a geodesic loss on $\mathrm{SO}(3)$ using Adam with an initial learning rate of $10^{-3}$ and cosine annealing to $10^{-5}$, batch size 128, for 40{,}000 iterations, and are evaluated every 2{,}000 iterations on 5{,}000 held-out pose pairs over 3 seeds.

\subsection{Robotic Arm Articulation}

We use the Franka Emika Panda 7-DoF robotic arm, loaded from its URDF
specification via the \texttt{ikpy} library for forward kinematics (FK).
The number of active degrees of freedom (DoF) is varied from 3 to 5, activating
joints from the base outward and fixing the remaining (distal) joints at zero radians.
We generate 1M valid pose--configuration pairs by uniformly sampling
active joint angles within their mechanical limits (as specified in the URDF),
computing the end-effector (EE) pose via FK, and retaining samples within the reachable workspace.
The valid samples are split into 900K training and 100K
evaluation samples. The network input consists of the 3D EE position concatenated with the EE
rotation under the chosen representation: three ZXY Euler angles for Euler
(input dimension 6) or the first two columns of the rotation matrix for 6D
(input dimension 9).
The output is the vector of active joint angles, and all models are trained
with MSE loss in joint-angle space. All models are matched to approximately 100K parameters and trained for 30 epochs with batch size 1{,}024, using AdamW with learning rate $10^{-3}$.
Evaluation is performed every 5 epochs on a subset of 5K evaluation
samples.
For each sample, predicted joint angles are assembled into a full 7-DoF
configuration and FK is used to compute the predicted EE position.
We report the mean forward-kinematics position error in centimeters (FKE) and the success rate
under a 1\,cm FK threshold (SR@1cm).
Table~\ref{tab:franka} reports results from the best-performing epoch.
For the workspace-robustness study in Table~\ref{tab:franka}(b), we repeat the 3-DoF protocol
while scaling the axis-aligned end-effector workspace box to $0.7\times$ and $0.5\times$ its
default half-extents about the workspace center, regenerating 1M valid pose--configuration pairs
within each shrunken box and keeping all other settings unchanged.

\subsection{Biomechanical Hand Articulation}

When constructing the inverse kinematics benchmark, forward kinematics is first applied to the MANO~\cite{romero2017embodied} pose and shape parameters to generate 21 3D hand joints. All joints are expressed in a wrist-centered coordinate frame; removing the root joint results in a 60-dimensional flattened input. We exclude the root rotation from supervision and predict only the 15 articulated joints, with output dimensions of 45 and 90 for the Euler-angle and the 6D representation~\cite{zhou2019continuity}, respectively. Considering the hand joint rotation ranges defined in~\cite{zhang2024weakly}, we adopt a consistent intrinsic ZXY convention for both Euler-to-matrix and matrix-to-Euler conversions, and additionally evaluate a baseline with a YZX rotation ordering that is inconsistent with these ranges. Input joint positions are normalized to $[-1,1]$ using training data statistics. To ensure a fair comparison between MLP and KAN under matched parameter counts, the MLP uses a single hidden layer with dimensions $[60,320,45]$ (Euler) and $[60,224,90]$ (6D), while the KAN uses spline degree 3 with 4 control points and layer dimensions $[60,64,45]$ (Euler) and $[60,45,90]$ (6D). Both architectures are matched at ${\sim}34$K parameters. Models are trained using AdamW with learning rate $2\times10^{-3}$, and batch size 1024 for approximately 10K steps. For data-efficiency experiments, the training set is randomly subsampled without replacement and the number of epochs is scaled inversely with the retained data fraction to keep the total number of optimization steps approximately constant across data fractions.

\subsection{Controlled Rotation Regression}

Rotation matrices are sampled from $\mathrm{SO}(3)$ using the ZXY Euler convention. We generate 500K training and 50K test samples per configuration. The network receives a flattened $3\times 3$ rotation matrix (9 dimensions) as input
and outputs the underlying Euler angles. The gimbal-prone middle axis is sampled from $[-\pi/2,\pi/2]$ and the two outer axes from $[-\pi,\pi]$; for each division factor $\mathrm{div}$, all three ranges are uniformly scaled by $1/\mathrm{div}$ (so the middle axis occupies $[-\pi/(2\,\mathrm{div}),\pi/(2\,\mathrm{div})]$). Optimization uses Adam with batch size 1{,}024, an initial learning rate of
$1.6\times10^{-4}$ reduced to $10^{-6}$ after 10{,}000 iterations,
and a total of 31{,}250 training iterations. Both KAN and MLP models are compared under matched parameter budgets; the MLP scaling in Table~\ref{tab:acc_eff}(b) uses 4-layer MLPs with hidden widths $\{48,100,220,300\}$, while the matched KAN uses 2 hidden layers of width 16 with 12 control points. For the angle-range study in Figure~\ref{fig:angle_range}, $\mathrm{div}$ is swept from $1$ to $2$ in steps of $0.1$ and additionally over $\{3,4,8\}$, reporting the best GE achieved during training for each value.